
\UseRawInputEncoding
\documentclass{article}

\usepackage[numbers]{natbib}

\usepackage{microtype}
\usepackage{graphicx}
\usepackage{subfigure}
\usepackage{booktabs} 

\usepackage{hyperref}



\usepackage[most]{tcolorbox}   
\usepackage{lmodern}           

\newtcblisting{promptbox}{     
  colback = gray!5,
  colframe = black!75,
  fonttitle = \bfseries,
  title = Prompt,
  listing only,                
  listing options = {
    basicstyle   = \ttfamily\small,
    breaklines   = true,
    columns      = fullflexible,
    escapeinside = {(*@}{@*)}, 
  },
  sharp corners,
  enhanced,
  breakable                    
}

\newtcblisting{reasontrace}{
  listing only,
  listing options = {
    basicstyle       = \ttfamily\small,
    breaklines       = true,
    breakindent      = 0pt,     
    breakautoindent  = false,   
    columns          = fullflexible,
  },
  colback  = gray!7,
  colframe = gray!50,
  boxrule  = 0.25pt,
  arc      = 2pt,
  left = 2pt, right = 2pt, top = 1pt, bottom = 1pt,
  enhanced,
  breakable
}
  

\usepackage[accepted]{icml2025}

\usepackage{amsmath}
\usepackage{amssymb}
\usepackage{mathtools}
\usepackage{amsthm}

\usepackage[capitalize,noabbrev]{cleveref}

\usepackage[textsize=tiny]{todonotes}

\icmltitlerunning{Semantic Context for Tool Orchestration}

\DeclareMathOperator*{\argmax}{arg\,max}


\newcommand{\vect}[1]{\mathbf{#1}} 
\newcommand{\matr}[1]{\mathbf{#1}} 
\newcommand{\bm}[1]{\mathbf{#1}}

\newtheorem{theorem}{Theorem}[section]
\newtheorem{lemma}[theorem]{Lemma}

\newtheorem{definition}[theorem]{Definition}

\newtheorem{assumption}[theorem]{Assumption}

\begin{document}

\twocolumn[
\icmltitle{Semantic Context for Tool Orchestration}




\begin{icmlauthorlist}
\icmlauthor{Robert M\"uller}{yyy}
\end{icmlauthorlist}

\icmlaffiliation{yyy}{Aganthos}

\icmlcorrespondingauthor{Robert M\"uller}{aganthos.ai@gmail.com}

\icmlkeywords{Machine Learning, ICML}

\vskip 0.3in
]



\printAffiliationsAndNotice{}  

\begin{abstract}
This paper demonstrates that Semantic Context (SC), leveraging descriptive tool information, is a foundational component for robust tool orchestration. Our contributions are threefold. First, we provide a theoretical foundation using contextual bandits, introducing SC-LinUCB and proving it achieves lower regret and adapts favourably in dynamic action spaces. Second, we provide parallel empirical validation with Large Language Models, showing that SC is critical for successful in-context learning in both static (efficient learning) and non-stationary (robust adaptation) settings. Third, we propose the FiReAct pipeline, and demonstrate on a benchmark with over 10,000 tools that SC-based retrieval enables an LLM to effectively orchestrate over a large action space. These findings provide a comprehensive guide to building more sample-efficient, adaptive, and scalable orchestration agents.

\end{abstract}

\section{Introduction}
\label{sec:introduction_revised_v5}

The capacity of intelligent systems, particularly Large Language Models (LLMs), is significantly amplified by their ability to orchestrate external tools—such as APIs, auxiliary agents, or specialized functions \citep{Patil2023Gorilla, Qin2023ToolBench, Schick2023Toolformer}. This orchestration is a sequential decision-making task: given a user query and a dynamic tool catalogue, an agent must select and use the most appropriate tool. While reinforcement learning (RL) offers a principled framework, naive application (e.g., LLMs generating tool invocations token-by-token) creates intractably large action spaces ($V^L$ with vocabulary size $V$ and sequence length $L$), hindering learning. A common simplification presents the agent with an explicit list of $O$ available 
 indices or tools, $\mathcal{A}_{avail} = \{a_1, \dots, a_O\}$, from which to select. However, this often discards valuable semantic descriptions $D(\tau)$  associated with each tool (e.g., API doc strings, capability summaries). Recent works using RL to train LLM to orchestrate tools rely on the provision of tool names and descriptions in the prompts \citep{feng2025retoolreinforcementlearningstrategic, singh2025agenticreasoningtoolintegration, zhang2025nemotronresearchtooln1exploringtoolusinglanguage, qian2025toolrlrewardtoollearning}. \cite{lin2024hammerrobustfunctioncallingondevice} improve tool call reliability by random augmentation of tool and argument names, thus pushing the model to rely on tool descriptions. This paper investigates the critical and quantifiable advantages of equipping agents with what we term the \textit{Semantic Context (SC)}—the collection of semantic descriptions for all currently available actions.

This SC is not merely a helpful addition but a fundamental component for effective tool orchestration. Our work establishes this through three core findings. 

First, we provide a theoretical and empirical foundation showing that even in \textbf{static settings} with a fixed tool set, SC enables more efficient learning. To do this, we develop \textbf{SC-LinUCB}, a bandit algorithm, and prove that it achieves favourable regret compared with non-semantic baselines by creating a more parsimonious and accurate reward model (Section~\ref{sec:sc_linucb_theory_revised}). Empirical support is provided by SC-LINUCB and in-context learning experiments with LLM. 

Second, we demonstrate SC's critical role in \textbf{dynamic adaptation}. Our experiments show that as tools are added or removed, an agent leveraging SC adapts gracefully, whereas baselines suffer from catastrophic forgetting and require costly retraining. This highlights SC as a key enabler for continual learning in evolving environments for both, SC-LINUCB and in-context learning LLM.

Finally, we show how SC makes tool orchestration practical at scale through a \textbf{FiReAct (Filter-Reason-Act) pipeline}. We demonstrate that semantically filtering a large corpus of tools into a small, relevant set is \textit{essential} for maintaining high accuracy as the number of tools grows into the thousands. This scalable application bridges our theoretical insights with the practical challenges faced by modern LLM agents (SubSection~\ref{subsec:sc_scaling_action_space}).

Our research draws from the contextual bandit framework \citep{Langford2007TheEA, Chu2011Contextual}, with LinUCB \citep{AbbasiYadkori2011Improved} as a cornerstone, and contributes by rigorously analysing features from \textit{a priori semantic embeddings of natural language action descriptions} and quantifying their regret impact. While action representation learning from interaction is common in RL \citep{Chandak2019LearningAR, pathakota2023dctdualchanneltraining}, and using natural language for actions has been explored \citep{Tennenholtz2019NaturalLanguageActions}, our focus is on leveraging pre-existing, structured semantic information. Addressing dynamic action spaces, central to continual learning, we differ from \citet{Chandak2020Lifelong} who infer latent action structures; we demonstrate how \textit{explicit, given semantic descriptions} enable robust adaptation without relearning action space representations. This complements continual RL's focus on evolving reward/transition functions \citep{pmlr-v151-muller22a, Khetarpal2020Towards}, aiming to furnish principled insights for more sample-efficient, generalizable, and adaptive tool-orchestrating agents that explicitly leverage SC.

When dealing with high dimensional task-/ action spaces there is a variety of approaches to dial down complexity. Examples include learning action elimination networks\citep{zahavy2019learnlearnactionelimination} to approaches partitioning the task space based on task embeddings \citep{mueller2020taming}. More recent tool-RAG methods tackle the problem from a retrieval perspective: small LMs learn a function-mask head that suppresses irrelevant APIs at inference time \citep{lin2024hammerrobustfunctioncallingondevice}; completeness-oriented retrievers rank tools so that only a minimal yet sufficient subset is forwarded to the reasoner \citep{qu2024colt,shi2025toolret}. Hierarchical bandit strategies have been explored for scalable, human-in-the-loop tool retrieval~\citep{Wahed2023Marble}.
\section{Problem Formulation}
\label{sec:problem_formulation_revised_v5}

We model the task of selecting an appropriate tool for a given query as a \textbf{contextual bandit} problem. This framework allows us to rigorously analyse the decision-making that underpins tool orchestration.

At each discrete time step $t \in \{1, \dots, T\}$, an agent observes a context (a user query $q_t \in \mathcal{Q}$) and must select an action $a_t$ from a set of currently available tools, $\mathcal{A}_t = \{a_1, \dots, a_{O_t}\}$ of magnitude $O_t$. The environment is stochastic: for a given query $q_t$, each action $a_i \in \mathcal{A}_t$ has a true but unknown probability of success, $p^{\text{eff}}(a_i, q_t)$. After selecting $a_t$, the agent receives a stochastic binary reward $r_t \in \{0, 1\}$, drawn from a Bernoulli distribution governed by this probability: $r_t \sim \text{Bernoulli}(p^{\text{eff}}(a_t, q_t))$.

The agent's objective is to learn a policy $\pi(a_t | q_t, H_{t-1})$ that maximizes the cumulative reward (or Return), $\sum_{t=1}^{T} r_t$. This is equivalent to minimizing the \textbf{Cumulative Expected Regret}, defined as the sum of the per-step differences between the expected reward of the optimal action for a given query and the expected reward of the action the agent actually chose:
\begin{equation}
\label{eq:regret}
    R_T = \sum_{t=1}^{T} \left( \max_{a \in \mathcal{A}_t} p^{\text{eff}}(a, q_t) - p^{\text{eff}}(a_t, q_t) \right).
\end{equation}

The central hypothesis of this paper is that an agent's policy can learn more efficiently and adapt faster to changes in the action space $\mathcal{A}_t$ if it explicitly leverages the \textbf{Semantic Context}, the rich descriptions associated with each action, rather than treating actions as abstract, opaque indices.

\begin{definition}[Semantic Context, $C_S(\mathcal{A}_t)$]
\label{def:semantic_action_context}
Given $\mathcal{A}_t$, the set of available actions at time $t$, the \textbf{semantic context} $C_S(\mathcal{A}_t)$ is the collection of semantic information related to these actions. Specifically, $C_S(\mathcal{A}_t) = \{ (a_i, D(a_i)) \}_{a_i \in \mathcal{A}_t}$, where for each action $a_i$, $D(a_i)$ is its natural language description (e.g., docstring). Each description is mapped to a $d_{\text{emb}}$-dimensional \textbf{semantic context embedding} $\phi(a_i) = \Xi(D(a_i))$ via an embedding function $\Xi$. This provides structured, \textit{a priori} information about the available actions, allowing an agent's policy $\pi(a_t | s_t, C_S(\mathcal{A}_t))$ to leverage both the usual state $s_t$ (which includes the query $q_t$) and the SC.
\end{definition}

Our analysis begins with the simplest instantiation of this framework: a stationary environment where the set of actions is fixed.

\begin{definition}[Semantic Context Bandit, SC-Bandit]
\label{def:sc_bandit}
An \textbf{SC-Bandit} models a single-step decision with a \textit{static} action space $\mathcal{A}_{\text{avail}}$. At each step $t$, given a query $q_t$, the agent selects an action $a_t$ based on its policy $\pi(a_t | q_t, C_S(\mathcal{A}_{\text{avail}}))$, where the Semantic Context is fixed.
\end{definition}

For SC MDP \ref{def:sc_mdp_v5} and the Lifelong SC MDP \ref{def:lsc_mdp_v5} with non-stationary action space we refer to appendix \ref{app:subsec_sc_mdp}. In all frameworks, the central hypothesis is that explicit incorporation and effective utilization of the Semantic Action Context $C_S(\mathcal{A}_t)$ enable agents to achieve superior learning efficiency, generalization, and adaptability.
\section{Theoretical Framework: Semantic LinUCB}
\label{sec:sc_linucb_theory_revised}
We analyse Semantic Contextual Linear UCB (SC-LinUCB), an adaptation of the LinUCB algorithm \citep{AbbasiYadkori2011Improved} that leverages semantic information from action descriptions. Our analysis demonstrates that by incorporating well-structured semantic features, SC-LinUCB can achieve significantly lower regret than LinUCB variants relying on non-semantic action representations. This improvement stems from a more efficient representation of the underlying reward structure, leading to better generalization and reduced exploration complexity.


Our theoretical contribution focuses on how the specific construction of semantic features $\vect{x}^{(sem)}$ for SC-LinUCB leads to a more favorable instantiation of this generic bound compared to using non-semantic features.

\subsection{Contextual Linear Bandits and Feature Design}
\label{subsec:sclinucb_features}
We operate within the standard contextual linear bandit framework (detailed in Appendix \ref{app:formal_assumptions_sclinucb}). At each time step $t$, given a query (context) embedding $\vect{q}_t \in \mathbb{R}^{d_q}$, the agent selects a tool $\tau_j$ from the $K_t$ available tools. Each tool $\tau_j$ is associated with a semantic description embedding $\bm{\phi}_j \in \mathbb{R}^{d_{desc}}$. The expected reward $\mathbb{E}[R_t | \vect{x}_{t,j}] = \vect{x}_{t,j}^T \bm{\theta}^*$ is linear in the constructed $d$-dimensional feature vector $\vect{x}_{t,j}$. The SC-LinUCB algorithm itself is Algorithm \ref{alg:sc_linucb_appendix} (Appendix \ref{app:algo_details}).

The core of our analysis lies in comparing two feature construction strategies:
\begin{itemize}
    \item \textbf{SC-LinUCB Semantic Features ($\vect{x}^{(sem)}$):} We construct $\vect{x}^{(sem)}_{t,j} = [\vect{q}_t; \bm{\phi}_j; \text{sim}(\vect{q}_t, \bm{\phi}_j); 1]$. The resulting feature dimension is $d_{sem} = d_q + d_{desc} + 1 + 1$. This design explicitly incorporates query attributes, tool semantic attributes, and their direct alignment.
    \item \textbf{LinUCB-NS Non-Semantic Features ($\vect{x}^{(non-sem)}$):} As a baseline, we use features $\vect{x}^{(non-sem)}_{t,j} = [\vect{q}_t; \mathbf{e}_j; 1]$, where $\mathbf{e}_j \in \mathbb{R}^K$ is the one-hot encoding for tool $\tau_j$. The dimension is $d_{non-sem} = d_q + K + 1$. This baseline distinguishes tools by identity but lacks explicit shared semantic information.
\end{itemize}
The generic regret for LinUCB algorithms 
stated in Appendix \ref{app:std_lemmas_proofs}), scaling as $\mathcal{R}_T = \tilde{O}(d \cdot \sigma_{eff} \cdot \sqrt{T})$, where $d$ is the feature dimension and $\sigma_{eff}$ is the effective noise standard deviation (incorporating observation noise and linear model approximation error).

\subsection{Regret Advantage via Efficient Semantic Representation}

To formalize the advantage of $\vect{x}^{(sem)}$, we introduce an assumption about the nature of the true reward function.

\begin{assumption}[Semantically Structured Rewards]
\label{ass:semantically_structured_rewards}
The true expected reward function $f^*(\vect{q}, \bm{\phi})$ is primarily determined by a limited number of underlying semantic interaction patterns between queries and tool semantic properties. Specifically, there exists an optimal linear model in the semantic feature space, $(\vect{x}^{(sem)}_{t,j})^T \bm{\theta}_{sem}^*$, that approximates $f^*(\vect{q}_t, \bm{\phi}_j)$ with a mean squared error $\sigma_{approx,sem}^2$.
Further, to achieve a comparable or better linear approximation quality using non-semantic one-hot features, i.e., $(\vect{x}^{(non-sem)}_{t,j})^T \bm{\theta}_{non-sem}^* \approx f^*(\vect{q}_t, \bm{\phi}_j)$ with error $\sigma_{approx,non-sem}^2 \ge \sigma_{approx,sem}^2$, the dimensionality $d_{non-sem}$ (which scales with $K$) may be significantly larger than $d_{sem}$ if $K$ is large and there is semantic redundancy across tools (i.e., $d_{desc}+1 \ll K$).
\end{assumption}

\begin{theorem}[Regret Reduction for SC-LinUCB]
\label{thm:sc_linucb_adv_semantic_generalization}
Under Assumption \ref{app:formal_assumptions_sclinucb} (for both SC-LinUCB with $(d_{sem}, \sigma_{eff,sem}, S_{sem}, L_{sem})$ and LinUCB-NS with $(d_{non-sem}, \sigma_{eff,non-sem}, S_{non-sem}, L_{non-sem})$) and Assumption \ref{ass:semantically_structured_rewards}:
SC-LinUCB achieves a cumulative regret $\mathcal{R}_T(SC)$ that is less than or equal to the regret of LinUCB-NS, $\mathcal{R}_T(NS)$, if its semantic features lead to a more favorable combination of dimensionality and effective noise. Specifically, $\mathcal{R}_T(SC) \le \mathcal{R}_T(NS)$ if the factor $d_{sem} \cdot \sigma_{eff,sem}$ (ignoring constants and polylog terms from $\alpha$) is smaller than $d_{non-sem} \cdot \sigma_{eff,non-sem}$.
A significant improvement ($\mathcal{R}_T(SC) \ll \mathcal{R}_T(NS)$) is realized if:
\begin{enumerate}
    \item \textbf{Parsimonious Representation:} $d_{sem} \ll d_{non-sem}$ (achievable if $d_{desc}+1 \ll K$) while maintaining comparable or better approximation quality ($\sigma_{eff,sem} \lesssim \sigma_{eff,non-sem}$). The regret reduction factor is roughly $d_{sem}/d_{non-sem}$.
    \item \textbf{Superior Fit:} Even if $d_{sem} \approx d_{non-sem}$, if semantic features provide a substantially better linear approximation, then $\sigma_{eff,sem} \ll \sigma_{eff,non-sem}$, leading to a regret reduction factor of roughly $\sigma_{eff,sem}/\sigma_{eff,non-sem}$.
\end{enumerate}

\end{theorem}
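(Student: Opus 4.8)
The plan is to derive both regret curves from the single generic bound $\mathcal{R}_T = \tilde{O}(d \cdot \sigma_{eff} \cdot \sqrt{T})$ stated in the appendix and then compare them as a ratio, so that the common $\sqrt{T}$ and confidence factors cancel and only the product $d \cdot \sigma_{eff}$ survives. First I would instantiate the generic bound twice: once for the semantic feature map with dimension $d_{sem} = d_q + d_{desc} + 2$, giving $\mathcal{R}_T(SC) = \tilde{O}(d_{sem}\,\sigma_{eff,sem}\,\sqrt{T})$, and once for the one-hot baseline with $d_{non-sem} = d_q + K + 1$, giving $\mathcal{R}_T(NS) = \tilde{O}(d_{non-sem}\,\sigma_{eff,non-sem}\,\sqrt{T})$. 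Because both instances run over the same horizon $T$ and (up to polylogarithmic factors) share the same form of confidence radius $\alpha$, dividing the two bounds cancels the $\sqrt{T}$ term and leaves the leading comparison factor $(d_{sem}\,\sigma_{eff,sem})/(d_{non-sem}\,\sigma_{eff,non-sem})$. This immediately establishes the first claim: $\mathcal{R}_T(SC) \le \mathcal{R}_T(NS)$ whenever $d_{sem}\,\sigma_{eff,sem} \lesssim d_{non-sem}\,\sigma_{eff,non-sem}$.

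Second, I would relate the effective noise to the approximation error supplied by Assumption~\ref{ass:semantically_structured_rewards}. Using the standard decomposition $\sigma_{eff}^2 = \sigma_{obs}^2 + \sigma_{approx}^2$, the observation component $\sigma_{obs}^2$ is the irreducible Bernoulli variance and is identical under both feature maps, since the reward-generating process $p^{\text{eff}}(a,q)$ does not depend on how the agent encodes actions. The assumption then gives $\sigma_{approx,sem}^2 \le \sigma_{approx,non-sem}^2$, whence $\sigma_{eff,sem} \le \sigma_{eff,non-sem}$.

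Finally, I would read off the two regimes directly from the comparison factor. In the parsimonious regime, semantic redundancy means $d_{desc}+1 \ll K$, so $d_{sem} \ll d_{non-sem}$; combined with $\sigma_{eff,sem} \lesssim \sigma_{eff,non-sem}$, the product ratio is dominated by $d_{sem}/d_{non-sem} \ll 1$. In the superior-fit regime, $d_{sem} \approx d_{non-sem}$ but the richer semantic model drives $\sigma_{approx,sem}^2 \ll \sigma_{approx,non-sem}^2$, so the ratio collapses to $\approx \sigma_{eff,sem}/\sigma_{eff,non-sem} \ll 1$. Either regime yields $\mathcal{R}_T(SC) \ll \mathcal{R}_T(NS)$, as claimed.

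The cleanest part of the argument is the algebraic comparison once the generic bound is in hand; the genuine obstacle is verifying that the sub-leading $d$-dependence hidden inside the confidence radius really does cancel in the ratio. Concretely, the elliptical-confidence term carries a $\log\det(\matr{V}_T/\lambda)$ factor that itself scales with $d$, so I must confirm that these extra $d_{sem}$ versus $d_{non-sem}$ dependencies are only polylogarithmic and can be absorbed into $\tilde{O}$, leaving the stated $d \cdot \sigma_{eff}$ comparison intact. A second delicate point is that the baseline bound must remain meaningful when $K$ (and hence $d_{non-sem}$) is large relative to $T$; I would note that the standard bound is still valid but becomes weak precisely when $d_{non-sem}$ grows, which is exactly the qualitative message the theorem is designed to convey.
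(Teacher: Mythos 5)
Your proposal is correct and takes essentially the same route as the paper's own proof: both instantiate the generic LinUCB bound $\tilde{O}(d \cdot \sigma_{eff} \cdot \sqrt{T})$ once per feature construction, compare the products $d_{sem}\,\sigma_{eff,sem}$ versus $d_{non-sem}\,\sigma_{eff,non-sem}$, and then read off the parsimony and superior-fit regimes exactly as the theorem states. Your explicit decomposition $\sigma_{eff}^2 = \sigma_{obs}^2 + \sigma_{approx}^2$ and your caveat about the $d$-dependence hidden in the confidence radius are useful sharpenings of steps the paper handles informally (it writes $\mathcal{R}_T \approx \tilde{O}((\sigma_{eff} d + S_\theta \sqrt{d})\sqrt{T})$ before adopting the simplified scaling), rather than a genuinely different argument; like the paper, your proof inherits the limitation that it compares upper bounds rather than the algorithms' actual regrets.
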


\begin{proof}[Proof Sketch]
The underlying mechanism is that SC-LinUCB learns a single model $\hat{\bm{\theta}}_{sem}$ over features that encode shared semantic properties. This enables generalization: experience with one tool informs the valuation of other semantically similar tools for similar contexts, leading to more efficient exploration of the (context $\times$ action) space compared to LinUCB-NS, which learns tool-specific parameters more independently via orthogonal one-hot encodings.

The regret for LinUCB (Theorem 19.2 in \cite{Lattimore2020Bandit}) is $\mathcal{R}_T = \tilde{O}(\alpha d \sqrt{T})$, where the exploration parameter $\alpha \approx \sigma_{eff}\sqrt{d\log(TK/\delta)} + \text{const} \cdot S_\theta$. Thus, $\mathcal{R}_T \approx \tilde{O}((\sigma_{eff}d + S_\theta\sqrt{d}) \sqrt{T})$. For simplicity in comparing dominant effects, we consider the scaling $\tilde{O}(d \cdot \sigma_{eff} \cdot \sqrt{T})$.

\textbf{Case 1: Parsimony ($d_{sem} \ll d_{non-sem}$ with $\sigma_{eff,sem} \approx \sigma_{eff,non-sem} = \sigma_{eff}$).}
Assumption \ref{ass:semantically_structured_rewards} posits that relevant semantic information can be captured in $d_{sem} = d_q+d_{desc}+2$ dimensions. If $K$ tools share significant semantic redundancy (e.g., many tools belong to a few archetypes effectively distinguished by $d_{desc}$ features, so $d_{desc} \ll K$), then $d_{sem}$ does not need to scale with $K$. In contrast, LinUCB-NS requires $d_{non-sem} = d_q+K+1$ to assign distinct parameters to each tool via its one-hot encoding $\mathbf{e}_j$. If $d_{desc}+1 \ll K$, then $d_{sem} \ll d_{non-sem}$. Since the regret scales with $d$, SC-LinUCB directly benefits from operating in a lower-dimensional parameter space, assuming the quality of linear fit ($\sigma_{eff}$) is maintained. This reduction in $d$ means SC-LinUCB needs to estimate fewer parameters, and the confidence ellipsoids effectively cover the parameter space with fewer samples. The sum of exploration terms $\sum s_{t,\tau_t}^2$ is bounded by $O(d_{sem}\log T)$ for SC-LinUCB versus $O(d_{non-sem}\log T)$ for LinUCB-NS (Lemma \ref{lemma:appendix_elliptical_potential}).

\textbf{Case 2: Superior Fit ($\sigma_{eff,sem} < \sigma_{eff,non-sem}$ with $d_{sem} \approx d_{non-sem}$).}
If the true reward function $f^*(\vect{q}, \bm{\phi})$ is more closely aligned with a linear model over semantic features $\vect{x}^{(sem)}$ than over non-semantic features $\vect{x}^{(non-sem)}$, then the approximation error component of $\sigma_{eff,sem}$ will be smaller than that of $\sigma_{eff,non-sem}$. Semantic features like direct query-tool similarity $(\vect{q}_t^T \bm{\phi}_j)$ can capture critical interaction effects that might be poorly represented by the interactions of $\vect{q}_t$ with orthogonal one-hot vectors $\mathbf{e}_j$ in a linear model. A smaller $\sigma_{eff,sem}$ makes the exploration parameter $\alpha_{sem}$ smaller for SC-LinUCB (for the same confidence $\delta$), leading to tighter UCB scores $p_{t,j}$, thus less exploration of suboptimal arms. This means the term $2\alpha s_{t,\tau_t}$ in the instantaneous regret bound is smaller on average.

\textbf{Generalization Impact on Exploration Dynamics:}
Beyond the direct impact on $d$ and $\sigma_{eff}$, the shared nature of $\hat{\bm{\theta}}_{sem}$ in SC-LinUCB means that an update from exploring $(\vect{q}_t, \bm{\phi}_a)$ refines weights associated with semantic components (e.g., specific dimensions of $\bm{\phi}_a$ or the similarity feature). If another tool $\bm{\phi}_b$ shares these semantic components relevant to $\vect{q}_t$, the UCB score for $(\vect{q}_t, \bm{\phi}_b)$ is implicitly updated and its uncertainty reduced more effectively than in LinUCB-NS, where the learning for tool $a$ (via $\mathbf{e}_a$) is largely isolated from tool $b$ (via $\mathbf{e}_b$). This leads to a more efficient "pruning" of the (context $\times$ action) space, reducing the cumulative sum $\sum s_{t,\tau_t}$.
The combination of these effects results in $\mathcal{R}_T(SC\text{-}LinUCB) \le \mathcal{R}_T(LinUCB\text{-}NS)$, with significant improvement when semantic features offer substantial parsimony or fit advantages.
\end{proof}
\section{ SC-LinUCBExperiments} 
\label{sec:linucb_experiments}

To empirically evaluate the impact of semantic information in contextual bandit settings, we employ two variants of the shared LinUCB algorithm \citep{AbbasiYadkori2011Improved}. Both agents aim to learn a single shared parameter vector $\bm{\theta}^* \in \mathbb{R}^d$ to predict expected rewards $\mathbb{E}[R_t | \vect{x}_{t,j}] \approx \vect{x}_{t,j}^T \bm{\theta}^*$. Their core distinction lies in the construction of the feature vector $\vect{x}_{t,j}$ for a given query (context) $\vect{q}_t$ and tool (action) $\tau_j$.

We compare SC-LinUCB and LinUCB-OneHot using their respective semantic and non-semantic feature constructions detailed in Section \ref{subsec:sclinucb_features}. For this experiment with $K=6$ tools, $d_{sem}=6$ and $d_{non-sem}=9$. Results are averaged over $N_{runs}=15$ seeds.

We conduct a series of experiments to empirically validate our theoretical findings and demonstrate the practical benefits of using semantic action features. We first focus on an intra-episode setting with a fixed action set, then evaluate adaptation in a continual learning scenario with dynamic action sets. Experiments are run on Colab (free tier CPU).

\subsection{Experiment 1: Intra-Episode Efficiency in a Multi-Context Environment}
\label{sec:exp1_main_paper_final}

\paragraph{Objective.}
This experiment validates our theoretical claim that SC-LinUCB achieves lower regret than LinUCB-OneHot by leveraging semantic action features in a multi-context setting with a fixed action set ($K=6$).

\paragraph{Environment Setup.}
The environment features $N_Q=3$ distinct query types (contexts) that cycle periodically over $T=10000$ timesteps. Each of the $K=6$ tools $\tau_j$ is associated with a 2D toy semantic embedding $\vect{\phi}_j$, derived from one of three underlying archetypes plus noise. Each query type is designed to align semantically with one specific tool archetype. Stochastic rewards $R_t \in \{0,1\}$ are determined by the semantic alignment between the current query $\vect{q}_t$ and the chosen tool's embedding $\vect{\phi}_j$. Full details are in Appendix \ref{app:exp1_details}.

\paragraph{Results.}
Figure \ref{fig:exp1_regret_main_final} presents the average cumulative regret (log scale). SC-LinUCB (orange line) shows substantially superior performance, maintaining an exceptionally low cumulative regret (around $10^0$) over $10000$ timesteps, indicating rapid convergence to a nearly optimal policy across contexts. LinUCB-OneHot (blue line), while exhibiting sublinear regret (indicating learning), incurs orders-of-magnitude higher regret (exceeding $10^3$). This stark difference underscores SC-LinUCB's ability to generalize semantic patterns across different (context, tool) pairings, leading to vastly improved sample efficiency compared to the baseline, which learns tool utilities more independently. Both algorithms used $\alpha=0.3$. For ablations over the value of $\alpha$ we refer to figure \ref{fig:exp1_full_appendix}. 

\begin{figure}[htbp]
    \centering
    \includegraphics[width=\linewidth]{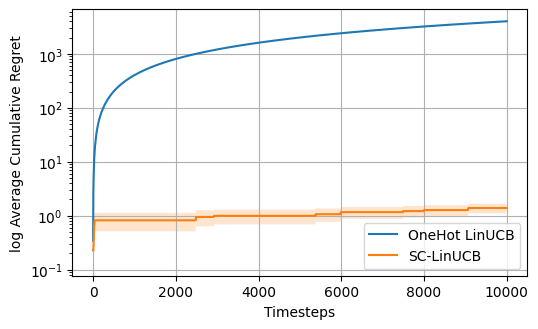} 
    \caption{Average Cumulative Regret (log scale) for SC-LinUCB and LinUCB-OneHot in the multi-context (switching) with fixed toolset experiemnt. Time steps $T=10000$, averaged over $15$ runs and  $\alpha = 0.3$).}
    \label{fig:exp1_regret_main_final}
    \end{figure}

\subsection{Experiment 2: Continual Adaptation to Dynamic Toolsets}
\label{sec:exp2_main_paper_revisedplot}

\paragraph{Objective.}
This experiment evaluates the agents' ability to adapt to a dynamically changing tool set over four distinct phases ($T_{phase}=2500$ steps each, for a total of $T=10000$ steps), involving tool addition, removal, and the introduction of novel semantic types alongside new relevant queries. The setup tests the robustness and generalization capabilities crucial for lifelong learning. 
The environment cycles through three base query types ($\vect{q}_A, \vect{q}_B, \vect{q}_C$) for the first three phases, with a fourth query type ($\vect{q}_D$) introduced in Phase 4. 
Full phase details, including specific tool archetype assignments and query cycling, are in Appendix \ref{app:exp2_full_plots_and_alphas}. LinUCB-OneHot re-initializes it's model matrices ($A,b$) when $K$ changes due to its feature space dependency on $K$. SC-LinUCB's model matrices and $d_{sem}$ remain fixed. Both agents use an exploration parameter $\alpha=0.5$ for this illustrative plot (sensitivity to $\alpha$ is explored in Appendix \ref{app:exp2_full_plots_and_alphas}). Results are averaged over $N_{runs}=15$ independent seeds.

\paragraph{Results.}
Figure \ref{fig:exp2_regret_main_newplot} (see figure \ref{fig:exp2_multialpha_appendix_plot} for corresponding reward plots) illustrates the average cumulative regret on a log scale.
\begin{figure}[htbp]
    \centering
    \includegraphics[width=\linewidth]{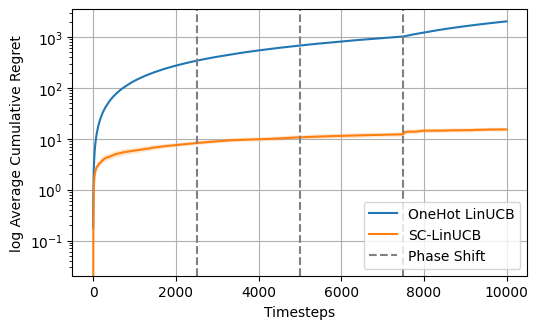} 
    \caption{Average Cumulative Regret (log scale) for SC-LinUCB and LinUCB-OneHot in the continual adaptation experiment. Each phase is $2500$ steps, changes indicated by dashed lines.} 
    \label{fig:exp2_regret_main_newplot}
\end{figure}
The performance of \textbf{SC-LinUCB (Semantic, orange line)} is remarkably robust. Its cumulative regret remains very low, consistently around $10^1$ (approximately 10-20 units), across all four phases and $10000$ time steps. Crucially, at the phase transitions (dashed vertical lines at $t=2500, 5000, 7500$), its regret curve shows almost no perturbation. This demonstrates SC-LinUCB's ability to gracefully handle tool removal, leverage its existing semantic knowledge to quickly incorporate new tools with familiar semantic embeddings (Phase 3), and effectively learn about novel semantic types when new queries make them relevant (Phase 4), all without catastrophic forgetting or costly re-learning phases.

In stark contrast, \textbf{LinUCB-OneHot (Non-Semantic, blue line)} exhibits significantly higher regret and poor adaptation. Its regret climbs steeply, exceeding $10^3$ by the end of the experiment. At each phase transition where the number of tools $K$ changes, its regret curve shows a pronounced upward jump or a sharply increased slope. This is a direct consequence of its model matrices ($A,b$) being re-initialized due to the change in its feature space dimensionality ($d_{non-sem} = d_q + K + 1$), forcing it to largely relearn the value of tools from scratch for the new configuration.

These results strongly underscore the high cost of adaptation for a non-semantic agent in dynamic environments. SC-LinUCB's fixed-dimensional semantic feature space, combined with its capacity for semantic generalization, provides robust, efficient, and truly continual learning in the face of a changing action landscape.

\section{SC in LLM Tool Orchestrators}
\label{sec:icl_experiments}

Using and training LLM to orchestrate across $O$ many tools can be done in a broad variety of methods. As previously mentioned it can be e.g. a classic policy mapping the query to an action (id or name) or a policy taking in the query alongside the semantic context. 
Crucially there is a variety of training regimes. A popular branch of methods used LLM fine-tuning techniques (full rank or low rank) using supervised fine-tuning \cite{prabhakar2025apigenmtagenticpipelinemultiturn} with RL reasoning \cite{feng2025retoolreinforcementlearningstrategic, zhang2025nemotronresearchtooln1exploringtoolusinglanguage} and algorithms like PPO or GRPO. All of these provide semantic context in their implementations. An alternative is to follow the recipe in \cite{DulacArnold2015DeepRL} and train a hierarchical policy that predicts in the first step for a given query a text description of the action it wants to take (or an embedding of the action) and performs in the second stage nearest-neighbour search/ softmax over k-nearest neighbours to select the respective action. A third method is to rely on the in-context learning abilities. 

We rigorously evaluate how SC impacts LLM in-context learning efficacy for sequential tool selection. We frame this as a multi-armed bandit (MAB) problem: an LLM agent learns to select optimal tools based on query context and interaction history presented via its prompt. Our investigation spans static and dynamic environments, assessing fundamental learning and adaptation.




\subsection{Experimental Design}
\label{ssec:experimental_design_main}

Our experimental design focuses on varying the semantic richness of action representations provided to the LLM. We consider four conditions:

     \textbf{Index Only (IO)}: Actions are presented as abstract, non-informative indices (e.g., ``Action 1'', ``Action 2''). This baseline tests the LLM's ability to learn solely from correlations in the interaction history.
     
         \textbf{Name Only (NO)}: Actions are presented by their names (e.g., ``Data Analyzer'', ``QuickTranslate''). This provides a concise signal, yet it is quite fragile. 
         
     \textbf{Name + Description (ND)}: Actions are presented with their names and detailed functional descriptions, offering the richest semantic context.
     
         \textbf{Description Only (DO)}: Actions are presented as abstract non-informative indices together with detailed functional descriptions.

The LLM for all experiments is \texttt{Gemini 2.0 Flash}. Each experiment was conducted for multiple independent trials (5 for static environments, 7 for dynamic environments). The full prompt structure, LLM parameters (temperature \(0.5\), max output tokens \(500-1500\)), and detailed configurations of arms and queries are provided in appendix \ref{app:experimental_details_llm_icl}. We report the average return over trials, where the expectation is taken over the stochasticity of rewards and LLM responses in figure \ref{fig:summary_reward_plots}. Average cumulative regrets are presented in figure \ref{fig:summary_regret_plots}.

We designed four distinct experimental scenarios:
\textbf{Exp 1} \textit{(fQfA)}: fixed query and fixed tools probes baseline in-context learning of best arm selection; \textbf{Exp 2} \textit{(mQfA)}: varied queries with fixed tools test contextual generalisation; \textbf{Exp 3} \textit{(fQmA)}: fixed query with evolving tools measures adaptation; \textbf{Exp 4} \textit{(mQmA)}: both queries and tools shift, stressing full non-stationary robustness.


\subsection{Results and Analysis}
\label{ssec:results_analysis_main}

\begin{figure*}[htbp]
    \centering
    \includegraphics[width=\textwidth]{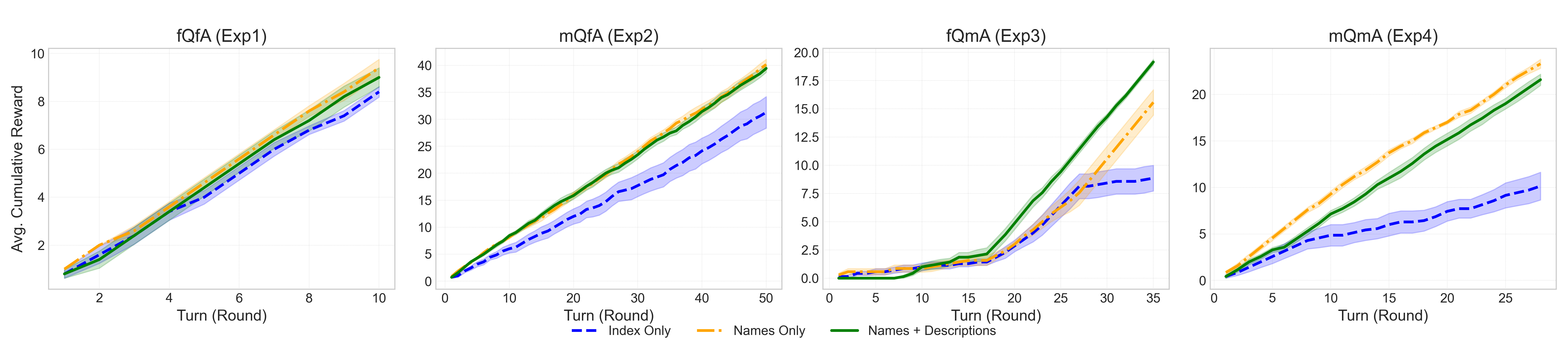} 
    \caption{Semantic Context yields higher average return across Experiments 1-4. Subplot titles indicate: f=Fixed, m=Moving, Q=Queries, A=Actions. Shaded regions represent \(\pm 1\) standard error of the mean (SEM) across trials. Higher values indicate better performance. Note the varying x and y-axis scales.}
    \label{fig:summary_reward_plots}
\end{figure*}

The experimental results, depicted by the average cumulative reward curves in \ref{fig:summary_reward_plots}, reveal a nuanced and significant impact of semantic context on the LLM's in-context learning and adaptation for tool selection. For the corresponding regret plot we refer the reader to figure \ref{fig:summary_regret_plots} in the appendix. With the small action gap and the poor performance of the index only, the cumulative reward plot tells the semantic baselines better apart. 

\textbf{Static Environments (fQfA - Exp1; mQfA - Exp2):}
In environments with fixed action spaces (Exp1 and Exp2 panels in \ref{fig:summary_reward_plots}), providing richer semantic context generally leads to higher cumulative rewards. ND (green solid line) and NO (orange dash-dot line) both outperform IO (blue dashed line). In Exp1 (fQfA), ND and NO perform very similarly, both achieving near-optimal reward accumulation, indicating that even names are sufficient for the single, repeated query. In Exp2 (mQfA), which involves multiple queries, ND maintains a slight edge over NO, suggesting that descriptions help differentiate tools more effectively as contextual complexity increases. IO consistently lags, demonstrating the LLM's difficulty in accumulating rewards without semantic cues to guide its choices.

\textbf{Dynamic Environments (fQmA - Exp3; mQmA - Exp4):}
The introduction of non-stationarity through changing action spaces and/or queries highlights more complex interactions.
In Experiment 3 (fQmA: fixed query, moving actions), the reward plot (\ref{fig:summary_reward_plots}, Exp3 panel) shows that the ND condition adapts most effectively to the introduction of a superior tool (``E3\_SuperCalc'') around turn 17 (phase details in \ref{app:exp3_details}). Its reward accumulation rate increases sharply after this point, surpassing NO. The NO condition also shows adaptation and reward growth but appears to either identify or commit to the superior tool with a delay or less consistency. The IO condition is slow in picking up the dynamic reward signal. 

Experiment 4 (mQmA: moving queries and actions) presents the most striking results (\ref{fig:summary_reward_plots}, Exp4 panel). In this highly dynamic scenario, the NO achieves the highest cumulative reward, notably outperforming ND. This intriguing outcome suggests that when both tasks and tools are frequently changing, the conciseness of tool names might offer an advantage in terms of agile decision-making or reduced risk of misinterpretation compared to longer descriptions. The ND condition still performs well and significantly better than IO, but the simpler NO representation appears more robust or efficient for the LLM in this specific rapidly evolving context. The IO condition again accumulates the least reward, underscoring its inadequacy in complex dynamic settings.

\textbf{Synthesis of Findings:}
The results consistently demonstrate that the absence of semantic context (IO) severely limits the LLM's ability to effectively learn and accumulate rewards in these tool-selection bandit tasks. Providing semantic information is crucial. Rich descriptions (ND) are generally powerful, particularly for rapid optimal tool identification in static settings and for adapting to clear improvements (like a new superior tool for a known task) in dynamic environments. However, the superior performance of Names Only in the most complex, fully dynamic scenario (Exp4) is a key finding. It suggests a potential trade-off: while detailed descriptions offer depth, concise and sufficiently indicative names might facilitate greater agility or reduce the processing/interpretation burden on the LLM when faced with rapid and multifaceted environmental changes. This implies that the optimal level of semantic detail for action representation may not be monolithic but rather depend on the stability and complexity of the operational environment. It is also crucial to reiterate the finding of \cite{lin2024hammerrobustfunctioncallingondevice} that the naming of tools alters much more across developers than descriptions, making this approach more fragile.

\textbf{Analysis of the Reasoning traces}

Reasoning traces from Gemini 2.0 Flash (detailed examples in App.~\ref{app:subsec_icl_reasoning_traces}) reveal \textit{how} LLMs leverage semantic context. 
\begin{reasontrace}
"Reasoning: The query mentions \"sales figures\" and \"growth pattern\", which indicates numerical data and the need to find trends. The Data Analyzer tool is specifically designed for processing numerical data arrays to find trends. The other tools, Text Formatter and Image Resizer, are not relevant to this query. Therefore, the Data Analyzer is the most likely to yield a reward.\n\n Chosen Action: Data Analyzer"
\end{reasontrace}
Two main decision mechanisms are apparent: the reliance on previous experience (ICL exploration \citep{krishnamurthy2024largelanguagemodelsexplore}), and semantic matching. Particularly with ND and NO, the LLM often engages in a two-step semantic matching process: 1) analyse the user query to infer the abstract capability required; 2) match this inferred need against the semantic information of available tools, selecting the best aligner. This resembles the two-step action selection \citep{DulacArnold2015DeepRL} where the policy maps first to a desired description (proto action) and subsequently selects the most appropriate match. 
For instance, for a sales growth query, the LLM with ND or NO typically identifies a "Data Analyser" by matching functionality. The richness of ND can lead to more nuanced initial alignments (Exp3), while the conciseness of NO might offer faster, if less precise, matching in dynamic scenarios (Exp4), potentially reducing cognitive load. This relies however on the concise tool naming ability of the tool creator. \cite{lin2024hammerrobustfunctioncallingondevice} raise that tool and argument naming is more user-sensitive than the function description, making the latter more robust. Crucially, NO and even more ND can enable LLM to prioritize semantic fit over immediate past negative rewards for the best tool. In contrast, IO relies solely on the ICL ability of LLM. The observed two-step reasoning provides a qualitative explanation for SC's quantitative benefits, suggesting that LLMs internalize descriptions for structured decision-making beyond simple index-based pattern matching.

\subsection{Semantic Context for Scaling Action Space }
\label{subsec:sc_scaling_action_space}

\begin{figure*}[htbp]
    \centering
    \includegraphics[width=1.0\textwidth]{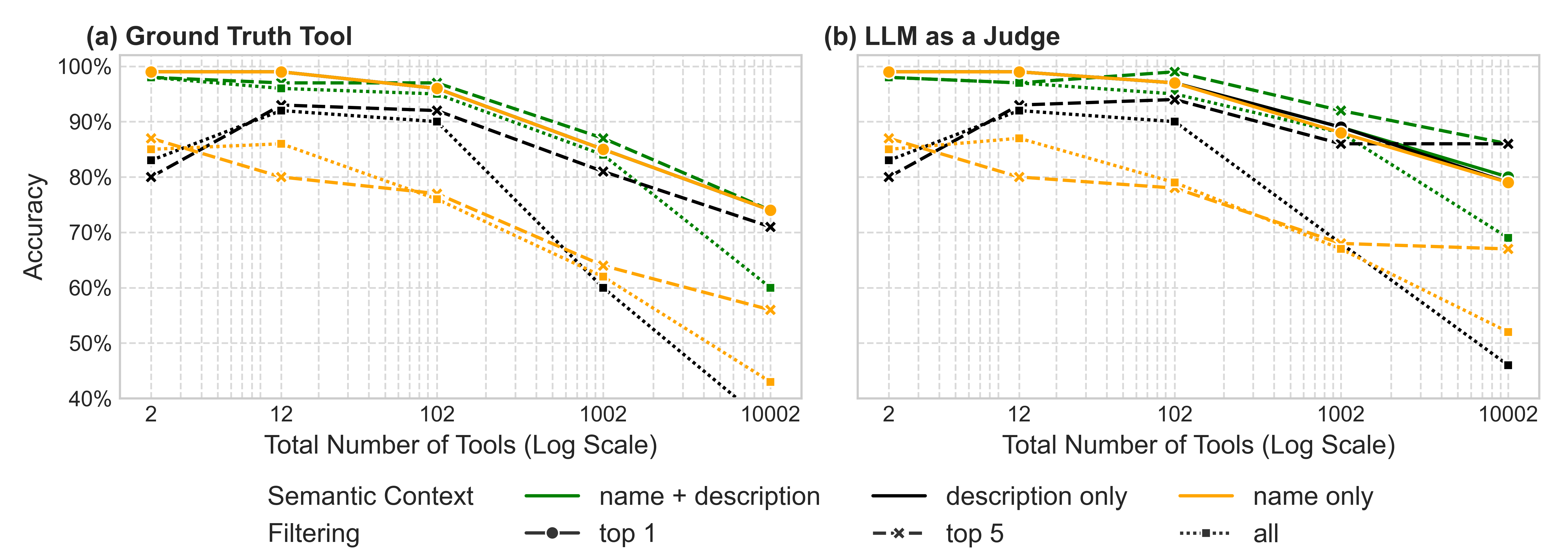} 
    \caption{Semantic Context is essential for scalable tool selection with \text{top 5} filtering followed by ND yields the strongest performance for large tool sets. Accuracy is plotted against the total number of tools (log scale). The left plot shows accuracy of identifying the ground truth tool, whereas the right plot uses an LLM as judge to evaluate the tool correctness.} 
    \label{fig:0_shot_generalisation_different_tool_numbers}
\end{figure*}

In the previous subsection reasoning traces showed a two step of action description and action selection pattern. In all this experiments all tools and descriptions were part of the policy LLM context. To be practical, an orchestrator must scale to large amounts of tools. As the context of LLM runs naturally at some point out, we propose a "filter-then-reason-then-act" (FiReAct) pipeline. Pseudocode of FiReAct is provided in alg \ref{alg:fireact} and can be thought of as Tool-RAG version of RAG \cite{lewis2021retrievalaugmentedgenerationknowledgeintensivenlp}.
\begin{algorithm}[H]
\caption{The FiReAct Pipeline}
\label{alg:fireact}
\begin{algorithmic}[1]
    \REQUIRE Embedding model $\phi$, LLM policy $\pi$, query $q_t$, tool set $\mathcal{A}_t$, num candidates $k$
    \STATE {\bf Filter:} Retrieve candidate subset $\mathcal{A}_{cand} \subseteq \mathcal{A}_t$ of size $k$ via semantic search using $\phi(q_t)$ and $\{\phi(a^i)|a^i \in \mathcal{A}_t\}$.
    \STATE {\bf Reason \& Act:} Select final action $a_{selected} \in \mathcal{A}_{cand}$ using the LLM policy $\pi(q_t, C_S(\mathcal{A}_{cand}))$.
    \STATE Return: {$a_{selected}$}
\end{algorithmic}
\end{algorithm}
There exist a variety of methods to filter for the candidate action set $\mathcal{A}_{cand}$. One could for example simply ask an LLM to do it. We instantiate the FiReAct pipeline using a \texttt{text-embedding-004} retriever and a \texttt{gemini-2.0-flash} LLM policy. Firstly query and tools are embedded and the top $k$ tools selected. These are feed (in the respective descriptive format (IO, ND,NO,DO) together with the query to the LLM policy. Based on this, the tool is selected.
FireAct can be deployed at both test and train time. We demonstrate its usage at test time in a  0-shot pipeline on a challenging benchmark constructed from the XLAM dataset \cite{zhang2024xlamfamilylargeaction}, evaluating 100 queries against a corpus of over 10,000 tools. 
Figure~\ref{fig:0_shot_generalisation_different_tool_numbers} plots tool selection accuracy for three strategies: pure semantic retrieval (`top 1`), LLM-filtered reasoning (`top 5`), and exhaustive unfiltered reasoning (`all`). The results are unequivocal: without SC, performance is catastrophic. The IO condition yields 0-shot just random pulls, thus $(1/O)$ success rate. 

Given SC's necessity, its quality is paramount. Rich ND context (green lines) consistently provides the highest accuracy across all methods, offering a distinct advantage over the weaker `name only` and `description only` signals. This shows that while any semantic signal is beneficial, more detailed information provides critical disambiguation power, especially as the number of distractor tools increases. Note however the superiour/ competitive performance of NO with N+D for up to 100 distractor tools. This demonstrates that more detailed semantic information provides critical disambiguation power in complex environments. However less SC (NO) is sometimes simpler, we hypothesize due to the smaller context window. 

The most crucial finding, however, reveals how to best leverage SC at scale. While pure retrieval (`top 1`) is powerful, its top-1 precision degrades as the tool space grows; with 10,000 distractors, the accuracy for `name + description` context falls to $~75\%(80\% \text{ with LLM Judge})$. The retriever's recall within the top 5 remains high, however, creating a vital opportunity for a reasoning step. By having the LLM re-rank these `top 5` candidates, we restore accuracy to nearly 90\%. This ~15\% accuracy gain 
validates the FiReAct pipeline as a robust, scalable strategy, where SC is the essential fuel for both initial filtering and final, high-fidelity reasoning.

\section{Future Work and Conclusion}
\label{sec:future_work_conclusion}

This paper establishes that explicit Semantic Context (SC), derived from action descriptions, is a powerful asset for efficient tool orchestration. We demonstrated this principle across two distinct paradigms: for linear contextual bandits, we proved that our \mbox{SC-LinUCB} framework enables more efficient learning and robust adaptation to dynamic action spaces compared to non-semantic baselines. We then showed that these principles translate directly to Large Language Models performing in-context learning. Our experiments confirmed that richer semantic context generally enhances tool selection and adaptability, and we demonstrated that our FiReAct pipeline leverages SC to make this approach scalable to thousands of tools.

Our work has limitations that open clear avenues for future investigation. Theoretically, key steps include developing sharp regret bounds for formally non-stationary toolsets ($\mathcal{A}_t$) and analyzing algorithmic robustness to noisy or imperfect semantic features. While our LLM experiments are indicative, they are specific to the chosen model and prompting strategies; direct theoretical guarantees for in-context tool learning remain a major open challenge. Empirically, extending our experiments to the fine-tuning of LLMs and developing end-to-end trainable retrieval-reasoning pipelines are promising directions.

In conclusion, by formalizing and demonstrating the ``semantic advantage,'' our work shows that leveraging the inherent meaning of actions is a more effective strategy than treating them as opaque indices. The consistent benefits of SC observed across different learning frameworks—from linear models to large transformers—suggest that providing structured, semantic descriptions of actions is a valuable and generalizable design principle. This approach provides a principled path toward developing agents that are more sample-efficient, adaptive, and scalable when interacting with complex and evolving toolsets.

\bibliographystyle{apalike}
\bibliography{references}

\appendix


\onecolumn
\section{Background}
\subsection{Notation at a Glance}
\label{ref:app_notation_overview}
\begin{table}[h]
  \centering
  \caption{Notation at a glance}
  \small
  \begin{tabular}{ll}
    \toprule
    Symbol & Meaning \\\midrule
    $\mathcal{A}_t $ & action set available at round $t$ of cardinality $O_t$ \\
    $\phi_t(a)$     & semantic feature vector of action $a$ \\
    $d_{sem}$         & similarity metric on $\mathcal{X}$ \\
    $\theta_\star$  & unknown linear reward vector \\
    $V_t$           & design matrix at round $t$ \\
    \bottomrule
  \end{tabular}
\end{table}

\subsection{Semantic Context MDP}

\label{app:subsec_sc_mdp}
\begin{definition}[Semantic Context MDP, SC-MDP]
\label{def:sc_mdp_v5}
An \textbf{SC-MDP} describes sequential decision-making with a \textit{fixed} toolset $\mathcal{A}_{avail}$ and its corresponding fixed Semantic Action Context $C_S(\mathcal{A}_{avail})$. It is an MDP $(\mathcal{S}, \mathcal{A}, P, R, \gamma)$ where:
     The state $s_t \in \mathcal{S}$ is typically $(h_t, q_t)$, representing history and current query.
     The action space $\mathcal{A}$ consists of choices $(a_j, \text{args}(a_j))$ where $a_j \in \mathcal{A}_{avail}$.
     The policy $\pi(a_t | s_t)$ implicitly utilizes the fixed $C_S(\mathcal{A}_{avail})$ (which defines this specific MDP environment) to select $a_t$.
     Transitions $P(s_{t+1} | s_t, a_t)$ and rewards $R(s_t, a_t)$ are standard. Tool execution yields an output $o_t$, forming part of $h_{t+1}$.
\end{definition}

\begin{definition}[Lifelong Semantic Context MDP, LSC-MDP]
\label{def:lsc_mdp_v5}
An \textbf{LSC-MDP} models scenarios with a \textit{dynamically changing} tool set $\mathcal{A}_t$. It is an MDP $(\mathcal{S}_{LSC}, \mathcal{A}_{LSC}, P_{LSC}, R_{LSC}, \gamma)$, where the state $s_t \in \mathcal{S}_{LSC}$ is $(h_t, q_t, C_S(\mathcal{A}_t))$, explicitly includes the time-dependent SC $C_S(\mathcal{A}_t)$ that changes as the tool set $\mathcal{A}_t$ evolves.
     The action space $\mathcal{A}_{LSC}(s_t)$ comprises choices $(a_j, \text{args}(a_j))$ where $a_j \in \mathcal{A}_t$.
     The policy is $\pi(a_t | s_t)$.
     Transition dynamics $P_{LSC}(s_{t+1} | s_t, a_t)$ determine the next query $q_{t+1}$ and, crucially, the next available toolset $\mathcal{A}_{t+1}$ (and thus $C_S(\mathcal{A}_{t+1})$).
\end{definition}

\section{Appendix Semantic Context LINUCB}
\label{app:proofs_and_algos_sc_linucb}

\subsection{Formal Assumptions}
\label{app:formal_assumptions_sclinucb}
For the linear bandit setting we have the following standard assumptions. 
\begin{assumption}[Contextual Linear Bandit Setting (Restated)]
Over $T$ timesteps, $t \in \{1, \dots, T\}$:
\begin{enumerate}
    \item A context $s_t$ is observed, from which a $d_q$-dimensional query embedding $\vect{q}_t = q(s_t)$ is derived.
    \item The agent selects an action (tool) $a_t$ from a fixed set of $K$ tools $\mathcal{A} = \{a_1, \dots, a_K\}$.
    \item Each tool $a_j \in \mathcal{A}$ has a $d_{desc}$-dimensional semantic description embedding $\bm{\phi}_j = \phi(D_{a_j})$.
    \item For each context-tool pair $(q_t, a_j)$, a $d$-dimensional feature vector $\vect{x}_{t,j} = x(\vect{q}_t, \bm{\phi}_j)$ is constructed. We assume $\|\vect{x}_{t,j}\|_2 \le L_x$.
    \item The expected reward is linear in these features: $\mathbb{E}[R_t(\vect{x}_{t,j}) | \vect{x}_{t,j}] = \vect{x}_{t,j}^T \bm{\theta}^*$ for an unknown true parameter vector $\bm{\theta}^* \in \mathbb{R}^d$. We assume $\|\bm{\theta}^*\|_2 \le S_\theta$.
    \item Observed rewards are $R_t(\vect{x}_{t,j}) = \vect{x}_{t,j}^T \bm{\theta}^* + \eta_{t,j}$, where $\eta_{t,j}$ is conditionally $\sigma$-subGaussian noise: $\mathbb{E}[\eta_{t,j} | \vect{x}_{t,j}] = 0$ and $\mathbb{E}[e^{\lambda \eta_{t,j}} | \vect{x}_{t,j}] \le e^{\lambda^2 \sigma^2 / 2}$ for all $\lambda \in \mathbb{R}$.
\end{enumerate}
\end{assumption}

\begin{algorithm}[tb]
   \caption{Bubble Sort}
   \label{alg:example}
\begin{algorithmic}
   \STATE {\bfseries Input:} data $x_i$, size $m$
   \REPEAT
   \STATE Initialize $noChange = true$.
   \FOR{$i=1$ {\bfseries to} $m-1$}
   \IF{$x_i > x_{i+1}$}
   \STATE Swap $x_i$ and $x_{i+1}$
   \STATE $noChange = false$
   \ENDIF
   \ENDFOR
   \UNTIL{$noChange$ is $true$}
\end{algorithmic}
\end{algorithm}

\subsection{SC-LinUCB Algorithm Detail}
\label{app:algo_details}
\begin{algorithm}[H]
\caption{SC-LinUCB (Shared Model) - Appendix Version}
\label{alg:sc_linucb_appendix}
\begin{algorithmic}[1]
\REQUIRE Exploration parameter $\alpha > 0$, regularization $\lambda_{reg} > 0$.
\STATE Initialize $\matr{A} = \lambda_{reg} \matr{I}_d$, $\vect{b} = \mathbf{0}_d$.
\FOR{$t = 1, \dots, T$:}
    \STATE Observe query $\vect{q}_t$.
    \STATE For each tool $a_j \in \mathcal{A}$ (with semantic embedding $\bm{\phi}_j$), construct feature vector $\vect{x}_{t,j} = x(\vect{q}_t, \bm{\phi}_j)$.
    \STATE Compute $\matr{A}^{-1}$. 
    \STATE Compute $\hat{\bm{\theta}}_t = \matr{A}^{-1} \vect{b}$.
    \STATE For each tool $a_j \in \mathcal{A}$:
    \STATE \quad $s_{t,j} \leftarrow \sqrt{\vect{x}_{t,j}^T \matr{A}^{-1} \vect{x}_{t,j}}$
    \STATE \quad $p_{t,j} \leftarrow \vect{x}_{t,j}^T \hat{\bm{\theta}}_t + \alpha s_{t,j}$
    \STATE Choose $a_t = \argmax_{j \in \{1, \dots, K\}} p_{t,j}$ (break ties randomly).
    \STATE Let $\vect{x}_t^{chosen} = \vect{x}_{t,a_t}$.
    \STATE Play tool $a_t$, observe reward $R_t(\vect{x}_t^{chosen})$.
    \STATE $\matr{A} \leftarrow \matr{A} + \vect{x}_t^{chosen} (\vect{x}_t^{chosen})^T$.
    \STATE $\vect{b} \leftarrow \vect{b} + R_t(\vect{x}_t^{chosen}) \vect{x}_t^{chosen}$.
\ENDFOR
\end{algorithmic}
\end{algorithm}

\subsection{Standard Lemmas and Proof for Generic LinUCB Regret}
\label{app:std_lemmas_proofs}

\begin{theorem}
[Confidence Set for $\bm{\theta}^*$, Theorem 2 from \citet{AbbasiYadkori2011Improved}]
\label{lemma:appendix_concentration}
Under Assumption \ref{ass:semantically_structured_rewards}, let $\delta \in (0,1)$ and $\lambda_{reg} > 0$. Define
$$ \alpha_t'(\delta) \coloneqq \sigma \sqrt{2 \log(1/\delta) + d \log\left(1+\frac{t L_x^2}{\lambda_{reg}d}\right)} + \sqrt{\lambda_{reg}}S_\theta $$
(This form of $\alpha$ is closer to the direct statement in Abbasi-Yadkori et al., Theorem 2, which uses $\log(\det(\matr{A}_t)/\det(\lambda_{reg}\matr{I})) \le d \log(1+tL_x^2/(\lambda_{reg}d))$).
Then, with probability at least $1-\delta$, for all $t \ge 1$, $\bm{\theta}^*$ lies in the set $C_t = \{ \bm{\theta} \in \mathbb{R}^d : \|\hat{\bm{\theta}}_t - \bm{\theta}\|_{\matr{A}_t} \le \alpha_t'(\delta) \}$.
This implies that for any $\vect{x} \in \mathbb{R}^d$ with $\|\vect{x}\|_2 \le L_x$,
$ |\vect{x}^T (\hat{\bm{\theta}}_t - \bm{\theta}^*)| \le \alpha_t'(\delta) \sqrt{\vect{x}^T \matr{A}_t^{-1} \vect{x}} $.
For the main paper, we use a slightly simplified $\alpha \ge \alpha_T'(\delta)$ for clarity, which might incorporate a $\log K$ term for uniform convergence over arms at each step if not absorbed into $\delta$.
\end{theorem}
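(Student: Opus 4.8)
The plan is to follow the self-normalized martingale route of Abbasi-Yadkori et al., assembling the confidence radius from three ingredients: an algebraic error decomposition, a self-normalized tail bound for the noise, and a determinant (elliptical potential) bound. Write the regularized design matrix as $\matr{A}_t = \lambda_{reg}\matr{I}_d + \sum_{s=1}^{t}\vect{x}_s\vect{x}_s^T$ and $\vect{b}_t = \sum_{s=1}^{t}R_s\vect{x}_s$, so that $\hat{\bm{\theta}}_t = \matr{A}_t^{-1}\vect{b}_t$. Substituting $R_s = \vect{x}_s^T\bm{\theta}^* + \eta_s$ and collecting the noise into $\vect{S}_t = \sum_{s=1}^{t}\eta_s\vect{x}_s$, a direct calculation gives the decomposition $\hat{\bm{\theta}}_t - \bm{\theta}^* = \matr{A}_t^{-1}\vect{S}_t - \lambda_{reg}\matr{A}_t^{-1}\bm{\theta}^*$. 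Measuring this error in the $\matr{A}_t$-norm and applying the triangle inequality yields $\|\hat{\bm{\theta}}_t-\bm{\theta}^*\|_{\matr{A}_t} \le \|\vect{S}_t\|_{\matr{A}_t^{-1}} + \lambda_{reg}\|\matr{A}_t^{-1}\bm{\theta}^*\|_{\matr{A}_t}$, where I have used the isometry $\|\matr{A}_t^{-1}\vect{S}_t\|_{\matr{A}_t} = \|\vect{S}_t\|_{\matr{A}_t^{-1}}$. The second (bias) term is deterministic and, using $\matr{A}_t \succeq \lambda_{reg}\matr{I}_d$, is bounded by $\sqrt{\lambda_{reg}}\|\bm{\theta}^*\|_2 \le \sqrt{\lambda_{reg}}S_\theta$, which already accounts for the additive $\sqrt{\lambda_{reg}}S_\theta$ appearing in $\alpha_t'(\delta)$.

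The core of the argument is controlling the stochastic term $\|\vect{S}_t\|_{\matr{A}_t^{-1}}$ uniformly over all $t$. First I would invoke the method of mixtures: for a fixed $\bm{\lambda}\in\mathbb{R}^d$, the conditional $\sigma$-subGaussianity of $\eta_t$ makes $M_t^{\bm{\lambda}} = \exp(\bm{\lambda}^T\vect{S}_t - \tfrac{\sigma^2}{2}\|\bm{\lambda}\|_{V_t}^2)$ a nonnegative supermartingale (with $V_t = \matr{A}_t - \lambda_{reg}\matr{I}_d$) started from $M_0^{\bm{\lambda}} = 1$. Integrating $M_t^{\bm{\lambda}}$ against a Gaussian prior $\bm{\lambda}\sim\mathcal{N}(0,\lambda_{reg}^{-1}\matr{I}_d)$ and completing the square produces a closed-form mixed supermartingale $\bar{M}_t = (\det(\matr{A}_t)/\det(\lambda_{reg}\matr{I}_d))^{-1/2}\exp(\tfrac{1}{2\sigma^2}\|\vect{S}_t\|_{\matr{A}_t^{-1}}^2)$. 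A maximal inequality for nonnegative supermartingales then gives, with probability at least $1-\delta$ and simultaneously for all $t\ge 1$, the self-normalized bound $\|\vect{S}_t\|_{\matr{A}_t^{-1}}^2 \le 2\sigma^2\log\!\big(\det(\matr{A}_t)^{1/2}\det(\lambda_{reg}\matr{I}_d)^{-1/2}/\delta\big)$.

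It then remains to convert the determinant ratio into the explicit logarithmic term. Using $\|\vect{x}_s\|_2\le L_x$, the trace bound $\Tr(\matr{A}_t)\le d\lambda_{reg}+tL_x^2$ together with AM-GM on the eigenvalues gives $\det(\matr{A}_t)\le (\lambda_{reg}+tL_x^2/d)^d$, hence $\det(\matr{A}_t)/\det(\lambda_{reg}\matr{I}_d)\le (1+tL_x^2/(\lambda_{reg}d))^d$. Substituting this into the self-normalized bound, taking square roots, and adding the bias term recovers exactly $\|\hat{\bm{\theta}}_t-\bm{\theta}^*\|_{\matr{A}_t}\le\alpha_t'(\delta)$, i.e.\ $\bm{\theta}^*\in C_t$. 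The final pointwise statement follows by Cauchy-Schwarz in the $\matr{A}_t$-geometry: for any $\vect{x}$ with $\|\vect{x}\|_2\le L_x$, $|\vect{x}^T(\hat{\bm{\theta}}_t-\bm{\theta}^*)| \le \|\vect{x}\|_{\matr{A}_t^{-1}}\|\hat{\bm{\theta}}_t-\bm{\theta}^*\|_{\matr{A}_t} \le \alpha_t'(\delta)\sqrt{\vect{x}^T\matr{A}_t^{-1}\vect{x}}$. The main obstacle is the self-normalized concentration step, since the supermartingale property and the method-of-mixtures integration are where the sub-Gaussian hypothesis is genuinely used and where the time-uniform ``for all $t$'' guarantee is secured; by comparison, the error decomposition, the determinant bound, and the final Cauchy-Schwarz step are routine.
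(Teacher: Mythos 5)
Your proposal is correct and takes essentially the same approach as the paper: the paper's own proof is a bare citation to Theorem 2 of \citet{AbbasiYadkori2011Improved}, and your sketch faithfully reconstructs exactly that argument (decomposition of $\hat{\bm{\theta}}_t - \bm{\theta}^*$ into a self-normalized noise term plus a regularization-bias term bounded by $\sqrt{\lambda_{reg}}S_\theta$, the method-of-mixtures supermartingale with a maximal inequality for the time-uniform guarantee, the trace/AM--GM determinant bound, and Cauchy--Schwarz in the $\matr{A}_t$-geometry for the pointwise statement). The only nit is a calibration detail: for the mixed supermartingale to take precisely the closed form you wrote, the Gaussian prior on $\bm{\lambda}$ should have covariance $(\sigma^2\lambda_{reg})^{-1}\matr{I}_d$ rather than $\lambda_{reg}^{-1}\matr{I}_d$, a harmless rescaling that does not affect the structure of the argument.
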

\begin{proof}
See proof of theorem 2 from \citet{AbbasiYadkori2011Improved} for full derivation.
\end{proof}

\begin{lemma}[Elliptical Potential Lemma, Lemma 11 from \citet{AbbasiYadkori2011Improved}]
\label{lemma:appendix_elliptical_potential}
Let $\vect{x}_1, \dots, \vect{x}_T \in \mathbb{R}^d$ be a sequence of feature vectors such that $\|\vect{x}_t\|_2 \le L_x$. Let $\matr{A}_t = \lambda_{reg}\matr{I}_d + \sum_{j=1}^{t-1} \vect{x}_j \vect{x}_j^T$. Then,
$$ \sum_{t=1}^T \min(1, \vect{x}_t^T \matr{A}_t^{-1} \vect{x}_t) \le 2d \log\left(1 + \frac{T L_x^2}{\lambda_{reg}d}\right) $$
\end{lemma}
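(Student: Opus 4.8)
The plan is to collapse the sum of clipped quadratic forms into a single log-determinant quantity and then control that determinant by an elementary trace bound. The engine of the whole argument is the rank-one determinant update, so I would start there. First I would record the recursion $\matr{A}_{t+1} = \matr{A}_t + \vect{x}_t \vect{x}_t^T$ (with $\matr{A}_1 = \lambda_{reg}\matr{I}_d$ under the stated convention) and apply the matrix determinant lemma to obtain $\det(\matr{A}_{t+1}) = \det(\matr{A}_t)\bigl(1 + \vect{x}_t^T \matr{A}_t^{-1}\vect{x}_t\bigr)$. Rearranging yields the identity $1 + \vect{x}_t^T \matr{A}_t^{-1}\vect{x}_t = \det(\matr{A}_{t+1})/\det(\matr{A}_t)$, which converts each per-round potential into a ratio of consecutive determinants that will telescope.

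Second, I would dispense with the clipping using the elementary inequality $\min(1,u) \le 2\log(1+u)$, valid for all $u \ge 0$ (verified piecewise: on $[0,1]$ it reduces to $\log(1+u) \ge u/2$, and on $[1,\infty)$ to $\log(1+u) \ge 1/2$). Applying this with $u = \vect{x}_t^T\matr{A}_t^{-1}\vect{x}_t$ and telescoping the product of determinant ratios gives
\[
\sum_{t=1}^T \min(1, \vect{x}_t^T\matr{A}_t^{-1}\vect{x}_t) \le 2\sum_{t=1}^T \log\frac{\det(\matr{A}_{t+1})}{\det(\matr{A}_t)} = 2\log\frac{\det(\matr{A}_{T+1})}{\det(\lambda_{reg}\matr{I}_d)}.
\]

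Third, I would bound the terminal determinant by a trace estimate. Since $\Tr(\matr{A}_{T+1}) = d\lambda_{reg} + \sum_{t=1}^T \|\vect{x}_t\|_2^2 \le d\lambda_{reg} + T L_x^2$, and a positive-definite matrix satisfies $\det \le (\Tr/d)^d$ by AM--GM applied to its eigenvalues, I obtain $\det(\matr{A}_{T+1}) \le (\lambda_{reg} + TL_x^2/d)^d$. Dividing by $\det(\lambda_{reg}\matr{I}_d) = \lambda_{reg}^d$ and substituting into the display collapses the bound to exactly $2d\log\bigl(1 + TL_x^2/(\lambda_{reg}d)\bigr)$, which is the claim.

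Every step here is conceptually standard; the one place demanding care is the clipping inequality. The $\min(1,\cdot)$ is essential rather than cosmetic, because individual potentials $\vect{x}_t^T\matr{A}_t^{-1}\vect{x}_t$ can exceed $1$ in early rounds, and only after clipping does the term-by-term passage to $\log(1+u)$ produce a clean telescoping bound. Thus the two spots where a careless argument could slip are the piecewise verification of $\min(1,u)\le 2\log(1+u)$ and ensuring the telescoping indexing is consistent with the convention $\matr{A}_1 = \lambda_{reg}\matr{I}_d$ so that the denominator is genuinely $\det(\lambda_{reg}\matr{I}_d)$.
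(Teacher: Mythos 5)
Your proof is correct and complete: the rank-one determinant update, the clipping inequality $\min(1,u)\le 2\log(1+u)$, the telescoping of determinant ratios, and the trace/AM--GM bound on $\det(\matr{A}_{T+1})$ together yield exactly the stated bound, and all the delicate points (piecewise verification of the clipping inequality, indexing from $\matr{A}_1=\lambda_{reg}\matr{I}_d$) are handled properly. This is essentially the same argument as the one the paper relies on---the paper itself gives no proof and simply defers to Lemma 11 of \citet{AbbasiYadkori2011Improved}, whose canonical proof is precisely the one you have reconstructed.
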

\begin{proof}
See proof of Lemma 11 from \citet{AbbasiYadkori2011Improved}.
\end{proof}

\subsection{Elleptical potential lemma}
We restate and proof the elleptical potential lemma:

\begin{lemma}[Elliptical Potential Lemma, Lemma 11 from \citet{AbbasiYadkori2011Improved}]
Let $\vect{x}_1, \dots, \vect{x}_T \in \mathbb{R}^d$ be a sequence of feature vectors such that $\|\vect{x}_t\|_2 \le L_x$. Let $\matr{A}_t = \lambda_{reg}\matr{I}_d + \sum_{j=1}^{t-1} \vect{x}_j \vect{x}_j^T$. Then,
$$ \sum_{t=1}^T \min(1, \vect{x}_t^T \matr{A}_t^{-1} \vect{x}_t) \le 2d \log\left(1 + \frac{T L_x^2}{\lambda_{reg}d}\right) $$
If $\lambda_{reg} \ge L_x^2$, then $\vect{x}_t^T \matr{A}_t^{-1} \vect{x}_t \le \vect{x}_t^T (\lambda_{reg}\matr{I}_d)^{-1} \vect{x}_t = \frac{\|\vect{x}_t\|^2}{\lambda_{reg}} \le \frac{L_x^2}{\lambda_{reg}} \le 1$, so the $\min(1,\cdot)$ can be removed. For a general $\lambda_{reg}$, the bound still holds with the $\min$.
\end{lemma}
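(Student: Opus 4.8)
The plan is to convert the sum of quadratic forms into a telescoping sum of log-determinant ratios, and then bound the final determinant by a trace argument. First I would record the rank-one update identity: since $\matr{A}_{t+1} = \matr{A}_t + \vect{x}_t \vect{x}_t^T$, the matrix determinant lemma gives $\det(\matr{A}_{t+1}) = \det(\matr{A}_t)\,(1 + \vect{x}_t^T \matr{A}_t^{-1} \vect{x}_t)$, so that $1 + \vect{x}_t^T \matr{A}_t^{-1} \vect{x}_t = \det(\matr{A}_{t+1})/\det(\matr{A}_t)$. This is the key structural observation tying each per-step exploration bonus to the multiplicative growth of the design matrix.

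Next I would dispatch the $\min(1,\cdot)$ using the elementary inequality $\min(1,u) \le 2\log(1+u)$, valid for all $u \ge 0$, which I would verify by checking the two regimes separately: for $u \in [0,1]$ both sides vanish at $u=0$ while $2\log(1+u)$ has the larger derivative, and for $u \ge 1$ the left side is capped at $1 \le 2\log 2 \le 2\log(1+u)$. Applying this term by term and substituting the determinant ratio yields
$$\sum_{t=1}^T \min(1, \vect{x}_t^T \matr{A}_t^{-1} \vect{x}_t) \le 2\sum_{t=1}^T \log\frac{\det(\matr{A}_{t+1})}{\det(\matr{A}_t)} = 2\log\frac{\det(\matr{A}_{T+1})}{\det(\matr{A}_1)},$$
where the intermediate sum telescopes and $\det(\matr{A}_1) = \det(\lambda_{reg}\matr{I}_d) = \lambda_{reg}^d$.

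Finally I would bound $\det(\matr{A}_{T+1})$ from above. By the AM-GM inequality on its eigenvalues, $\det(\matr{A}_{T+1}) \le (\Tr(\matr{A}_{T+1})/d)^d$, and the trace is controlled by the norm bound: $\Tr(\matr{A}_{T+1}) = d\lambda_{reg} + \sum_{t=1}^T \|\vect{x}_t\|_2^2 \le d\lambda_{reg} + T L_x^2$. Combining these gives $\det(\matr{A}_{T+1}) \le (\lambda_{reg} + T L_x^2/d)^d$, and substituting into the telescoped bound produces $2d\log\bigl(1 + T L_x^2/(\lambda_{reg} d)\bigr)$, which is exactly the claimed inequality.

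I expect the only genuine subtlety to be the $\min(1,u) \le 2\log(1+u)$ step: the factor of $2$ is precisely what makes the bound hold uniformly in $u$, and one must treat $u \le 1$ and $u \ge 1$ separately rather than lean on a single convexity argument. The remaining ingredients—the determinant identity, the telescoping, and the AM-GM trace bound—are routine. The special case $\lambda_{reg} \ge L_x^2$ noted in the statement is immediate, since then each $\vect{x}_t^T \matr{A}_t^{-1} \vect{x}_t \le \|\vect{x}_t\|_2^2/\lambda_{reg} \le L_x^2/\lambda_{reg} \le 1$, so the $\min(1,\cdot)$ may be dropped before running the same argument.
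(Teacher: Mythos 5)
Your proof is correct and complete. The paper does not prove this lemma itself---its ``proof'' is a bare citation to Lemma~11 and Appendix~A.3 of \citet{AbbasiYadkori2011Improved}---and your argument (the rank-one determinant identity $\det(\matr{A}_{t+1}) = \det(\matr{A}_t)\,(1+\vect{x}_t^T\matr{A}_t^{-1}\vect{x}_t)$, the elementary inequality $\min(1,u)\le 2\log(1+u)$ verified in the two regimes $u\le 1$ and $u\ge 1$, the telescoping of log-determinants, and the AM--GM determinant--trace bound $\det(\matr{A}_{T+1})\le(\Tr(\matr{A}_{T+1})/d)^d$) is precisely the standard proof contained in that reference, so you have in effect reconstructed the argument the paper defers to by citation.
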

\begin{proof}
See Lemma 11 and Appendix A.3 in \citet{AbbasiYadkori2011Improved}. 
\end{proof}

\subsection{Detailed Argument for Theorem \ref{thm:sc_linucb_adv_semantic_generalization} (Advantage of SC-LinUCB)}
\label{app:proof_adv_semantic_generalization}

Theorem \ref{thm:sc_linucb_adv_semantic_generalization} posits that SC-LinUCB achieves lower regret than LinUCB-NS by enabling more efficient exploration and generalization through its semantic features. We elaborate on the two main mechanisms:

\textbf{1. More Parsimonious Effective Model (Relating to $d$):}
The regret bound for LinUCB scales roughly with $d$, the feature dimensionality.
For SC-LinUCB, features $\vect{x}^{(sem)}_{t,j} = [\vect{q}_t; \bm{\phi}_j; \text{sim}(\vect{q}_t, \bm{\phi}_j); 1]$ have dimension $d_{sem} = d_q + d_{desc} + 1 + 1$.
For LinUCB-NS with one-hot tool encodings, $\vect{x}^{(non-sem)}_{t,j} = [\vect{q}_t; \mathbf{e}_j; 1]$ has dimension $d_{non-sem} = d_q + K + 1$.

Assumption \ref{app:formal_assumptions_sclinucb} implies that the true reward function $f^*(\vect{q}_t, \bm{\phi}_j)$ depends on shared semantic properties encoded in $\bm{\phi}_j$ and their interaction with $\vect{q}_t$. If the diversity of $K$ tools can be meaningfully captured by $d_{desc}$-dimensional semantic embeddings such that $d_{desc} \ll K$ (e.g., tools fall into fewer semantic archetypes than $K$, or their reward-relevant variations are low-dimensional), then $d_{sem}$ can be substantially smaller than $d_{non-sem}$.
SC-LinUCB learns a single parameter vector $\hat{\bm{\theta}}_{sem} \in \mathbb{R}^{d_{sem}}$. This vector effectively models the utility of semantic *attributes* (dimensions of $\vect{q}_t$, dimensions of $\bm{\phi}_j$, and their similarity) and how they combine to predict reward. This model is shared across all $K$ tools.
LinUCB-NS, on the other hand, needs to learn parameters in $\hat{\bm{\theta}}_{non-sem} \in \mathbb{R}^{d_{non-sem}}$ where $K$ of these dimensions (from $\mathbf{e}_j$) are dedicated to capturing the unique identity and behavior of each tool. If there is underlying semantic redundancy across tools that LinUCB-NS cannot exploit, it is effectively learning a higher-dimensional model than necessary.
Thus, if $d_{sem} < d_{non-sem}$ and both feature sets achieve a comparable quality of linear approximation (i.e., $\sigma_{eff,sem} \approx \sigma_{eff,non-sem}$), the $d$ factor in the regret bound directly favors SC-LinUCB. This represents a reduction in the complexity of the parameter space to be learned.

\textbf{2. Faster Reduction of Uncertainty for Semantically Similar Options (Relating to $\sum s_{t,a_t}$):}
The instantaneous regret $r_t$ is bounded by $2\alpha s_{t,a_t} = 2\alpha \sqrt{\vect{x}_{t,a_t}^T \matr{A}_t^{-1} \vect{x}_{t,a_t}}$. The cumulative regret depends on the sum of these exploration terms.
Consider the update to the covariance matrix $\matr{A}_{t+1} = \matr{A}_t + \vect{x}_t \vect{x}_t^T$. The inverse $\matr{A}_{t+1}^{-1}$ shrinks based on the direction of $\vect{x}_t$. The exploration term $s_{t',j}^2 = \vect{x}_{t',j}^T \matr{A}_{t+1}^{-1} \vect{x}_{t',j}$ for any arm $j$ at a future step $t'$ will decrease more significantly if $\vect{x}_{t',j}$ has a substantial component along the direction of $\vect{x}_t$ (the chosen arm's features at time $t$).

For SC-LinUCB, if tool $a_a$ is chosen at time $t$ (with features $\vect{x}^{(sem)}_{t,a}$), the update to $\matr{A}_{sem}$ reflects increased certainty along the semantic dimensions present in $\vect{x}^{(sem)}_{t,a}$. Now, consider another tool $a_b$. If $a_b$ is semantically similar to $a_a$ with respect to context $\vect{q}_t$ (or a similar context $\vect{q}_{t'}$), then their feature vectors $\vect{x}^{(sem)}_{t,a}$ and $\vect{x}^{(sem)}_{t',b}$ will share many active semantic components (e.g., similar $\bm{\phi}$ components, similar interaction terms). Consequently, the exploration term $s_{t',b}^{(sem)}$ for tool $a_b$ will also be reduced due to the information gained from pulling $a_a$. The agent effectively learns about a "semantic neighborhood" of tools with each pull.

For LinUCB-NS, the feature vectors $\vect{x}^{(non-sem)}_{t,a} = [\vect{q}_t; \mathbf{e}_a; 1]$ and $\vect{x}^{(non-sem)}_{t,b} = [\vect{q}_t; \mathbf{e}_b; 1]$ (for $a \ne b$) have orthogonal tool-identity components $\mathbf{e}_a$ and $\mathbf{e}_b$. An update from pulling $a_a$ (involving $\mathbf{e}_a$) primarily reduces uncertainty associated with $\mathbf{e}_a$ and its interaction with $\vect{q}_t$. It has minimal effect on reducing the uncertainty associated with the distinct orthogonal direction $\mathbf{e}_b$. Thus, LinUCB-NS learns little about $a_b$'s specific utility from pulling $a_a$, even if $a_a$ and $a_b$ are semantically very similar.

This implies that SC-LinUCB can "cross off" or gain confidence about larger regions of the (context $\times$ semantic tool property) space with each observation. As a result, the sum of exploration terms $\sum_{t=1}^T s_{t,a_t}$ is expected to be smaller for SC-LinUCB compared to LinUCB-NS over $T$ steps, as it requires fewer "distinctly exploratory" pulls to identify good actions across the spectrum of contexts and tools. While the Elliptical Potential Lemma (Lemma \ref{lemma:appendix_elliptical_potential}) bounds $\sum s_{t,a_t}^2$ by $O(d \log T)$ for both, the actual sequence of $s_{t,a_t}$ values chosen by SC-LinUCB can be smaller on average due to this generalization, leading to a tighter sum for $\sum s_{t,a_t}$ when applying Cauchy-Schwarz.

Combining a potentially smaller $d_{sem}$ with a more efficient exploration dynamic (leading to a smaller effective sum of exploration bonuses), SC-LinUCB achieves lower cumulative regret.

\subsection{SC-LinUCB in the continual setting}
\label{ss:sc_linucb_continual}
Beyond efficiency with a fixed set of tools, SC-LinUCB's semantic feature design offers significant advantages in continual learning scenarios where the set of available tools $\mathcal{A}_t$ (and thus its size $K_t$) changes over time. This is a critical capability for agents in evolving environments.

Consider a setting with phases, where within each phase $p$, the toolset $\mathcal{A}^{(p)}$ is fixed, but it can change between phases (e.g., $\mathcal{A}^{(p+1)} = (\mathcal{A}^{(p)} \setminus \mathcal{A}_{removed}) \cup \mathcal{A}_{added}$).

\begin{theorem}[Low-Cost Adaptation of SC-LinUCB to Dynamic Toolsets]
\label{thm:continual_adaptation_sc_linucb}
Let SC-LinUCB use semantic features $\vect{x}^{(sem)}$ of fixed dimension $d_{sem}$ and LinUCB-NS use one-hot features $\vect{x}^{(non-sem)}$ of dimension $d_{non-sem}(K_t) = d_q + K_t + 1$.
When the set of available tools changes from $\mathcal{A}^{(p)}$ (size $K^{(p)}$) to $\mathcal{A}^{(p+1)}$ (size $K^{(p+1)}$):

\begin{enumerate}
    \item \textbf{SC-LinUCB (Semantic):}
        \begin{itemize}
            \item Its feature dimension $d_{sem}$ remains constant.
            \item Its learned parameter vector $\hat{\bm{\theta}}_{sem}^{(p)}$ (from phase $p$) and covariance matrix $\matr{A}_{sem}^{(p)}$ remain valid and are directly carried over to phase $p+1$.
            \item For any newly added tool $a_{new} \in \mathcal{A}_{added}$ with semantic embedding $\bm{\phi}_{new}$, SC-LinUCB can immediately compute its feature vector $\vect{x}^{(sem)}_{q, new}$ and estimate its utility using the existing $\hat{\bm{\theta}}_{sem}^{(p)}$, yielding an informed initial UCB score.
            \item The "cost of adaptation" is primarily the exploration required for new semantic aspects introduced by $\mathcal{A}_{added}$ that were not sufficiently covered by $\hat{\bm{\theta}}_{sem}^{(p)}$. If new tools are semantically similar to previously seen optimal tools, adaptation is very fast.
        \end{itemize}
    \item \textbf{LinUCB-NS (Non-Semantic Baseline):}
        \begin{itemize}
            \item If $K^{(p+1)} \ne K^{(p)}$, its feature dimension $d_{non-sem}(K_t)$ changes. This necessitates a change in its parameter vector $\hat{\bm{\theta}}_{non-sem}$ and matrices $\matr{A}_{non-sem}, \vect{b}_{non-sem}$.
            \item Common strategies for LinUCB-NS include:
                \textit{(a) Full Re-initialization:} $\matr{A}_{non-sem}$ and $\vect{b}_{non-sem}$ are reset. The agent effectively relearns from scratch for the new toolset $\mathcal{A}^{(p+1)}$, incurring regret similar to starting a new bandit problem of size $K^{(p+1)}$.
                \textit{(b) Heuristic Adaptation:} Attempting to adapt $\matr{A}_{non-sem}, \vect{b}_{non-sem}$ (e.g., adding/removing rows/columns) is complex and typically still treats new tool IDs as completely unknown entities requiring extensive exploration.
            \item For any newly added tool $a_{new}$, LinUCB-NS has no prior information derived from other tools about its utility, as its one-hot encoding is orthogonal to others.
            \item The "cost of adaptation" involves significant relearning for the entire (or substantial parts of) the new toolset.
        \end{itemize}
\end{enumerate}
Consequently, over a sequence of phases with changing toolsets, SC-LinUCB is expected to achieve substantially lower cumulative regret than LinUCB-NS due to its fixed-dimensional semantic representation, knowledge transfer via $\hat{\bm{\theta}}_{sem}$, and ability to gracefully incorporate or ignore tools based on their semantic features without model restructuring.
\end{theorem}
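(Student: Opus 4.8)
The plan is to prove Theorem~\ref{thm:continual_adaptation_sc_linucb} by a phase-wise regret decomposition, treating each phase as a (re)started bandit instance for the non-semantic baseline and as a warm-started instance for SC-LinUCB, then summing across phases. First I would fix the structure: let there be $P$ phases, each of length $T_{phase}$, with toolset $\mathcal{A}^{(p)}$ of size $K^{(p)}$ active in phase $p$. The overall regret is $\mathcal{R}_T = \sum_{p=1}^{P} \mathcal{R}^{(p)}$, where $\mathcal{R}^{(p)}$ is the regret accrued during phase $p$ relative to the per-query optimal tool in $\mathcal{A}^{(p)}$. For each single phase, the generic LinUCB bound from Theorem~\ref{lemma:appendix_concentration} and the Elliptical Potential Lemma (Lemma~\ref{lemma:appendix_elliptical_potential}) apply directly, giving $\mathcal{R}^{(p)} = \tilde{O}(d \cdot \sigma_{eff} \cdot \sqrt{T_{phase}})$ with the feature dimension $d$ appropriate to each algorithm.

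Next I would instantiate the two algorithms separately. For LinUCB-NS under the full re-initialization strategy (clause 2a), each phase is a fresh bandit of feature dimension $d_{non\text{-}sem}(K^{(p)}) = d_q + K^{(p)} + 1$, so its design matrix resets to $\lambda_{reg}\matr{I}$ at each phase boundary and it pays the full per-phase cost $\mathcal{R}^{(p)}(NS) = \tilde{O}\bigl((d_q + K^{(p)} + 1)\,\sigma_{eff,non\text{-}sem}\,\sqrt{T_{phase}}\bigr)$. Summing gives $\mathcal{R}_T(NS) = \tilde{O}\bigl(\sigma_{eff,non\text{-}sem}\sqrt{T_{phase}}\sum_{p}(d_q + K^{(p)} + 1)\bigr)$, which scales with the cumulative toolset sizes and, crucially, has no cross-phase information reuse because the one-hot identities of any newly added tool are orthogonal to all previously estimated directions. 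For SC-LinUCB, the key observation is that $\matr{A}_{sem}$ and $\vect{b}_{sem}$ persist across phase boundaries, so I would invoke the Elliptical Potential Lemma on the \emph{entire} horizon $T$ rather than phase-by-phase: $\sum_{t=1}^{T}\min(1, (\vect{x}^{(sem)}_t)^T \matr{A}_t^{-1}\vect{x}^{(sem)}_t) \le 2 d_{sem}\log(1 + T L_{sem}^2/(\lambda_{reg}d_{sem}))$. Via Cauchy--Schwarz this yields $\mathcal{R}_T(SC) = \tilde{O}(d_{sem}\,\sigma_{eff,sem}\,\sqrt{T})$, with $d_{sem}$ fixed and independent of any $K^{(p)}$. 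The adaptation cost for newly added tools then enters only through the concentration radius $\alpha'_t(\delta)$ evaluated at $\vect{x}^{(sem)}_{q,new}$: if $\bm{\phi}_{new}$ lies (approximately) in the span already well-explored by $\matr{A}_{sem}^{(p)}$, the bonus $s_{t,new} = \sqrt{(\vect{x}^{(sem)})^T \matr{A}^{-1}\vect{x}^{(sem)}}$ is already small, formalizing the ``informed initial UCB score'' claim.

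The comparison then follows by combining the two bounds: $\mathcal{R}_T(SC)/\mathcal{R}_T(NS)$ is governed by $d_{sem}\sqrt{T}$ against $\sqrt{T_{phase}}\sum_p (d_q + K^{(p)})$, which is favorable whenever the per-phase toolsets are large or numerous and $d_{desc}+1 \ll K^{(p)}$, mirroring the parsimony argument of Theorem~\ref{thm:sc_linucb_adv_semantic_generalization} but now amplified by the elimination of per-phase restarts. I would state the formal conclusion as $\mathcal{R}_T(SC) \le \mathcal{R}_T(NS)$ under the condition $d_{sem}\sqrt{T} \lesssim \sqrt{T_{phase}}\sum_p d_{non\text{-}sem}(K^{(p)})$, with strict improvement when toolsets change across many phases.

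The main obstacle I anticipate is making the ``cost of adaptation for new semantic aspects'' fully rigorous rather than heuristic. The Elliptical Potential Lemma bounds the \emph{sum} of squared bonuses, but it does not by itself certify that a newly added tool inherits a \emph{small} individual bonus from prior exploration of \emph{other} tools; that requires an explicit assumption that the reward-relevant semantic directions spanned by $\mathcal{A}_{added}$ are already covered by the eigenspace of $\matr{A}_{sem}^{(p)}$ that has accumulated sufficient mass. Quantifying this cleanly would mean introducing a coverage or effective-dimension condition on the semantic embeddings (e.g.\ a lower bound on the minimum eigenvalue of $\matr{A}_{sem}^{(p)}$ restricted to the span of $\{\bm{\phi}_{new}\}$), which the current assumptions (Assumption~\ref{ass:semantically_structured_rewards}) only gesture at qualitatively. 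I would therefore present the phase-summed bounds rigorously and flag the semantic-coverage condition as the precise hypothesis under which the ``very fast adaptation'' claim for similar new tools holds, rather than attempting to derive it from the existing subgaussian/boundedness assumptions alone.
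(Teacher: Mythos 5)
Your proposal is correct and reaches the paper's conclusion, but your handling of SC-LinUCB is genuinely different from — and in one respect cleaner than — the paper's own argument. The paper treats both algorithms phase-by-phase: for LinUCB-NS (with resets) it charges a fresh-bandit cost $\tilde{O}(d_{non\text{-}sem}(K^{(p)})\sqrt{T_{phase}})$ per phase, exactly as you do, but for SC-LinUCB it also argues within each phase (invoking Theorem~\ref{thm:sc_linucb_adv_semantic_generalization} with $d = d_{sem}$) and then asserts qualitatively that phase transitions incur only ``minimal structural cost,'' because $\hat{\bm{\theta}}_{sem}$ and $\matr{A}_{sem}$ carry over and new tools get semantically informed UCB scores. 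Your move of applying the elliptical potential lemma once over the \emph{entire} horizon $T$ for SC-LinUCB sidesteps that hand-wave entirely: since the OFUL-style confidence set and the per-step optimism argument are indifferent to which arms happen to be available at each round (regret is measured against the best \emph{available} arm, and concentration only concerns the chosen features), a single $\tilde{O}(d_{sem}\,\sigma_{eff,sem}\sqrt{T})$ bound holds with no transition terms at all, provided one shared $\bm{\theta}^*_{sem}$ governs rewards across all phases — which is exactly what the semantic-features assumption supplies. This yields an explicit comparison condition ($d_{sem}\sqrt{T}$ versus $\sqrt{T_{phase}}\sum_p d_{non\text{-}sem}(K^{(p)})$) where the paper offers only a verbal cost comparison. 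Finally, your diagnosis of the residual gap is accurate and applies to the paper's proof as well: the claim that a \emph{specific} newly added tool inherits a small individual exploration bonus (the ``very fast adaptation'' clause) does not follow from the elliptical potential lemma, which controls only sums of bonuses; it requires a coverage condition on the eigenspace of $\matr{A}_{sem}^{(p)}$ along $\bm{\phi}_{new}$. The paper simply asserts this alignment qualitatively, whereas you correctly isolate it as the missing hypothesis — so your version is, if anything, the more honest formalization of what is and is not proved.
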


\begin{proof}[Proof Sketch for Theorem \ref{thm:continual_adaptation_sc_linucb}]
This theorem's argument builds on the properties of the agents and the implications of Theorem \ref{thm:sc_linucb_adv_semantic_generalization} applied piecewise.

\textbf{For SC-LinUCB:}
The feature space $\mathbb{R}^{d_{sem}}$ and the parameter vector $\bm{\theta}_{sem}^*$ are defined over semantic properties, not tool identities or the count $K_t$. Thus, the learned model $(\hat{\bm{\theta}}_{sem}, \matr{A}_{sem})$ retains its validity and utility when the set of available tools $\mathcal{A}_t$ changes.
\begin{itemize}
    \item \textit{Tool Addition:} When $a_{new}$ (with $\bm{\phi}_{new}$) is added, SC-LinUCB calculates $\vect{x}^{(sem)}_{q,new}$ and its UCB score using the current $\hat{\bm{\theta}}_{sem}$ and $\matr{A}_{sem}$. If $\bm{\phi}_{new}$ aligns semantically with query features for which $\hat{\bm{\theta}}_{sem}$ has learned high weights, $a_{new}$ will be explored efficiently. The exploration cost is for resolving uncertainty about this specific $\vect{x}^{(sem)}_{q,new}$ within the existing learned model structure. No part of the model needs to be "resized" or "reset."
    \item \textit{Tool Removal:} If $a_{removed}$ is removed, SC-LinUCB simply no longer considers it for selection. Its learned $\hat{\bm{\theta}}_{sem}$ and $\matr{A}_{sem}$ (which contain information from past pulls of $a_{removed}$) remain valid for evaluating the remaining tools.
\end{itemize}
The regret within any phase $p$ where $\mathcal{A}^{(p)}$ is fixed is governed by Theorem \ref{thm:sc_linucb_adv_semantic_generalization} with $d=d_{sem}$. The transitions between phases incur minimal structural cost.

\textbf{For LinUCB-NS (OneHot):}
The feature space $\mathbb{R}^{d_{non-sem}(K_t)}$ explicitly depends on the current number of tools $K_t$ via the one-hot encodings $\mathbf{e}_j \in \mathbb{R}^{K_t}$.
\begin{itemize}
    \item \textit{Tool Addition (K increases):} $d_{non-sem}$ increases. The matrices $\matr{A}_{non-sem}$ and $\vect{b}_{non-sem}$ must be expanded. The new dimensions corresponding to the new tool ID have no prior history. Effectively, the agent must learn about this new tool's interaction with all query types from scratch. If the agent fully resets $\matr{A}_{non-sem}, \vect{b}_{non-sem}$ (as done in our Experiment 2 for a clear baseline), it starts a new learning problem with regret $\tilde{O}(d_{non-sem}(K_{new})\sqrt{T_{phase}})$. Even with more sophisticated matrix adaptation, the components of $\bm{\theta}_{non-sem}^*$ relevant to the new tool are unknown.
    \item \textit{Tool Removal (K decreases):} $d_{non-sem}$ decreases. The agent might discard rows/columns from $\matr{A}_{non-sem}, \vect{b}_{non-sem}$. This is less detrimental than addition if no reset occurs, but the overall problem structure for its features has changed.
\end{itemize}
The key issue is that LinUCB-NS's learned knowledge is tied to specific tool indices. If these indices change, or new ones appear, extensive relearning is often needed for those affected dimensions. The strategy of re-initializing $\matr{A}, \vect{b}$ upon change in $K$ (as implemented for LinUCB-OneHot in our Experiment 2) represents a clear case where it incurs a full bandit learning cost for the new configuration.

\textbf{Comparing Adaptation Costs:}
The "cost" can be seen as the additional regret incurred during a phase transition compared to an oracle that was already adapted.
For SC-LinUCB, this cost is low because $\hat{\bm{\theta}}_{sem}$ provides immediate, semantically-informed estimates for new tools, and its structure is stable.
For LinUCB-NS (with resets on $K$ change), this cost is high, equivalent to the initial regret of a new bandit problem.
Thus, over multiple phases of toolset changes, the cumulative regret of SC-LinUCB will be substantially lower due to these significantly reduced adaptation costs at phase boundaries, on top of its potential intra-phase efficiency from Theorem \ref{thm:sc_linucb_adv_semantic_generalization}.
\end{proof}

\subsection{Experiment 1: Detailed Setup and Full Results for Intra-Episode Efficiency}
\label{app:exp1_details}

This section provides further details for Experiment 1, which evaluates the intra-episode efficiency of SC-LinUCB with semantic features against LinUCB-OneHot with non-semantic features in a multi-context toy environment.

\paragraph{Environment Design.}
The environment is a contextual bandit task designed to highlight the benefits of semantic generalization.
\begin{itemize}
    \item \textbf{Timesteps ($T$):} Each experimental run consists of $T=10000$ timesteps.
    \item \textbf{Tools ($K$):} There are $K=6$ tools available throughout each run.
    \item \textbf{Tool Semantic Archetypes and Embeddings ($\bm{\phi}_j$):} Tools are designed around $N_{arch}=3$ underlying semantic archetypes. Each tool $a_j$ is assigned one of these archetypes. Its $d_{tool\_sem}=2$ dimensional toy semantic embedding $\vect{\phi}_j$ is generated by taking the corresponding archetype vector and adding Gaussian noise with zero mean and standard deviation $\sigma_{emb\_noise}=0.05$. This noise is re-generated for each of the $N_{runs}$ independent experimental trials to ensure robustness of results to minor variations in embeddings.
        The archetype vectors are:
        \begin{itemize}
            \item Archetype 1 ($\vect{\phi}_{arch1}$): $[0.9, 0.1]^T$ (2 tools assigned this archetype)
            \item Archetype 2 ($\vect{\phi}_{arch2}$): $[0.1, 0.9]^T$ (2 tools assigned this archetype)
            \item Archetype 3 ($\vect{\phi}_{arch3}$): $[-0.7, -0.7]^T$ (2 tools assigned this archetype, replacing the previous 1 'type3' and 1 'noise' for more symmetry)
        \end{itemize}
    \item \textbf{Queries/Contexts ($\vect{q}_t$):} There are $N_Q=3$ distinct query types, each represented by a $d_q=2$ dimensional toy embedding. These queries cycle periodically every $N_Q$ timesteps (i.e., $q_A, q_B, q_C, q_A, q_B, q_C, \dots$).
        The query embeddings are:
        \begin{itemize}
            \item Query A ($\vect{q}_A$): $[1.0, 0.2]^T$, designed to align best with Tool Archetype 1.
            \item Query B ($\vect{q}_B$): $[0.2, 1.0]^T$, designed to align best with Tool Archetype 2.
            \item Query C ($\vect{q}_C$): $[-0.8, -0.8]^T$, designed to align best with Tool Archetype 3.
        \end{itemize}
    \item \textbf{Reward Function ($R_t$):} The reward $R_t \in \{0,1\}$ is stochastic, drawn from a Bernoulli distribution. The success probability $P(\text{success} | \vect{q}_t, \vect{\phi}_j)$ is determined by the semantic alignment between the current query $\vect{q}_t$ and the chosen tool's embedding $\vect{\phi}_j$. Specifically:
    $$ P(\text{success}) = \text{clip}(P_{base} + C_{sim} \cdot (\vect{q}_t^T \vect{\phi}_j) + B_{align}, P_{min}, P_{max} ) $$
    where $P_{base}=0.45$ is a base success rate, $C_{sim}=0.40$ scales the dot product similarity, and $B_{align}=0.25$ is a bonus awarded if the chosen tool's true archetype matches the current query's preferred archetype. Probabilities are clipped to $[P_{min}=0.05, P_{max}=0.95]$. This structure ensures that tools whose semantic embeddings align well with the current query, especially those of the preferred archetype, have a higher expected reward.
\end{itemize}

\paragraph{Agent Configurations.}
Both SC-LinUCB and LinUCB-OneHot are instances of the stanard LinUCB algorithm differing only in their feature construction:
\begin{itemize}
    \item \textbf{SC-LinUCB (Semantic):} Uses $d_{sem} = d_q + d_{tool\_sem} + 1 (\text{similarity}) + 1 (\text{bias}) = 2+2+1+1=6$ dimensional features: $\vect{x}^{(sem)}_{t,j} = [\vect{q}_t; \vect{\phi}_j; \vect{q}_t^T \vect{\phi}_j; 1]$.
    \item \textbf{LinUCB-OneHot (Non-Semantic Baseline):} Uses $d_{non-sem} = d_q + K + 1 (\text{bias}) = 2+6+1=9$ dimensional features: $\vect{x}^{(non-sem)}_{t,j} = [\vect{q}_t; \mathbf{e}_j; 1]$, where $\mathbf{e}_j$ is the one-hot encoding for tool $a_j$.
\end{itemize}
Both agents use $\lambda_{reg}=1.0$. We evaluate exploration parameters $\alpha \in \{0.3, 0.5, 1.0\}$.

\paragraph{Evaluation Metrics.}
Results are averaged over $N_{runs}=15$ independent Monte Carlo runs. We report:
\begin{enumerate}
    \item \textbf{Average Cumulative Reward:} $\frac{1}{N_{runs}} \sum_{run=1}^{N_{runs}} \sum_{t=1}^{T} R_t^{(run)}$.
    \item \textbf{Average Cumulative Regret:} $\frac{1}{N_{runs}} \sum_{run=1}^{N_{runs}} \sum_{t=1}^{T} (\mathbb{E}[R|\vect{q}_t, a_t^*] - \mathbb{E}[R|\vect{q}_t, a_t^{(run)}])$. Here, $\mathbb{E}[R|\vect{q}_t, a]$ is the true expected reward (success probability) of tool $a$ for query $\vect{q}_t$, and $a_t^*$ is the tool with the maximum expected reward for $\vect{q}_t$. This uses expected instantaneous regret for smoother non-decreasing cumulative regret curves.
\end{enumerate}

\paragraph{Full Experimental Results.}
Figure \ref{fig:exp1_full_appendix} shows both the average cumulative reward and average cumulative regret on logarithmic y-axes for all tested $\alpha$ values.

\begin{figure}[htbp]
    \centering
    \includegraphics[width=\textwidth]{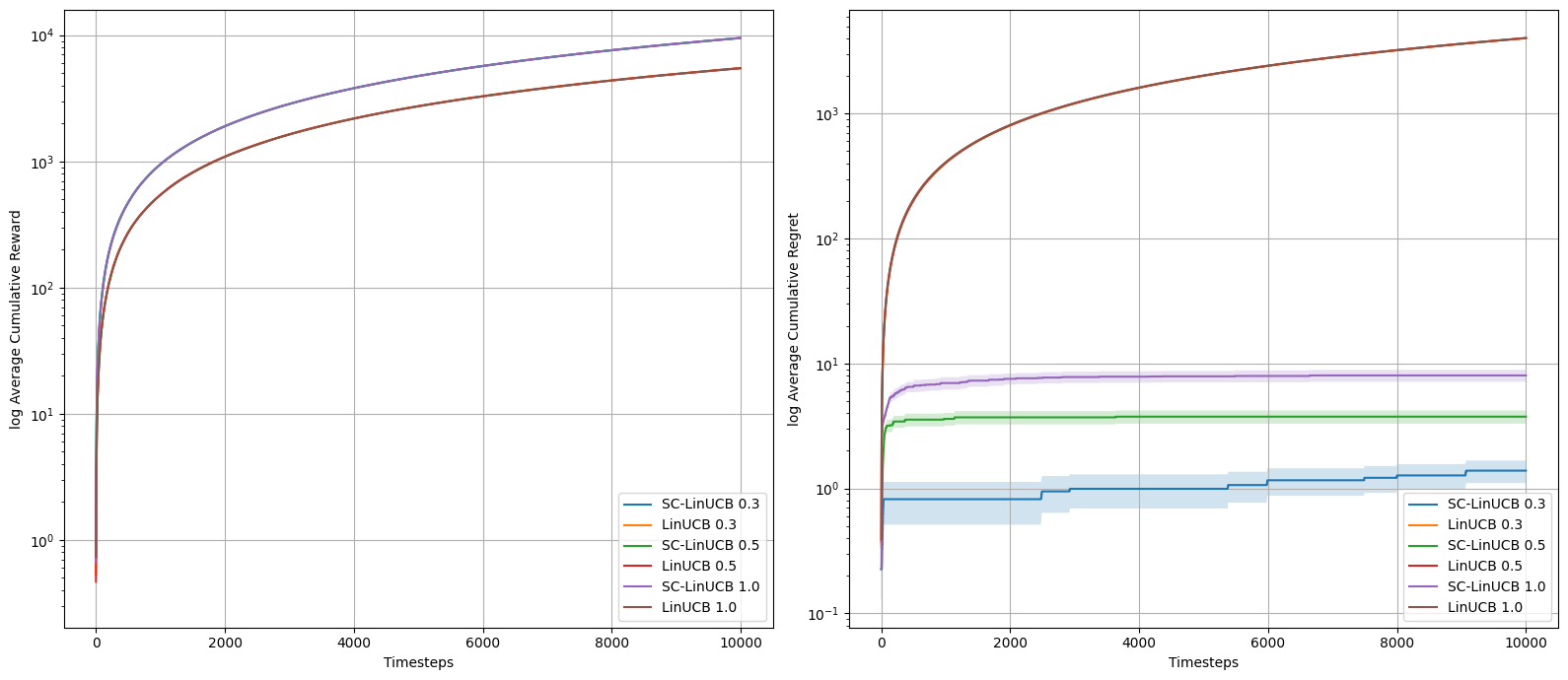} 
    \caption{Full results for Experiment 1: SC-LinUCB (Semantic) vs. LinUCB-OneHot (Non-Semantic) in the multi-context toy environment ($T=10000$, $15$ runs). Left: Average Cumulative Reward (log scale). Right: Average Cumulative Regret (log scale). Different line styles/colors within agent types correspond to $\alpha \in \{0.3, 0.5, 1.0\}$.}
    \label{fig:exp1_full_appendix}
\end{figure}

The results clearly indicate the superiority of SC-LinUCB. In the reward plot (left), SC-LinUCB variants (particularly with $\alpha=1.0$, purple dashed line) accumulate substantially more reward over time compared to LinUCB-OneHot variants. The log scale emphasizes the sustained higher rate of reward collection.

The regret plot (right) offers the most striking comparison. SC-LinUCB agents maintain an extremely low cumulative regret (primarily between $10^0$ and $10^1$), indicating rapid convergence to near-optimal policies for the cycling contexts. The SC-LinUCB (Semantic) $\alpha=0.3$ (blue solid line) shows the lowest regret overall. In stark contrast, all LinUCB-OneHot variants incur regret that is orders of magnitude higher, reaching $10^3$. While their regret curves are sublinear (indicating learning), their inefficiency compared to SC-LinUCB is evident. The LinUCB-OneHot agent with $\alpha=1.0$ (brown solid line) performs best among the non-semantic baselines but is still vastly outperformed.

These empirical findings strongly corroborate our theoretical analysis (Theorem \ref{thm:sc_linucb_adv_semantic_generalization}). The ability of SC-LinUCB to generalize across tools and contexts using a compact semantic feature space ($d_{sem}=6$) leads to substantially more efficient learning than LinUCB-OneHot, which must learn more independently for each tool ID within its higher-dimensional feature space ($d_{non-sem}=9$). The semantic features provide a powerful inductive bias that aligns with the problem structure, reducing the effective complexity faced by the learning algorithm.

\subsection{Experiment 2: Detailed Results for Continual Adaptation with Varying Exploration}
\label{app:exp2_full_plots_and_alphas}

This section provides the full results for Experiment 2, which evaluates the continual adaptation capabilities of SC-LinUCB (Semantic) and LinUCB-OneHot (Non-Semantic) in an environment with dynamically changing toolsets. We present a sensitivity analysis with respect to the exploration parameter $\alpha \in \{0.3, 0.5, 1.0\}$.

\paragraph{Experimental Setup Recap.}
The environment consists of four distinct phases, each lasting $T_{phase}=2500$ timesteps (total $T=10000$). The set of available tools ($K$) and active query types ($N_Q$) evolve across these phases, involving tool addition (of both semantically familiar and novel types), tool removal, and the introduction of new query types corresponding to novel tools.
\begin{itemize}
    \item \textbf{Phase 1 ($K=4, N_Q=3$):} Initial tools: $\{a_{A1}, a_{A2} (\text{type1}); a_{B1}, a_{B2} (\text{type2})\}$. Queries: $\vect{q}_A, \vect{q}_B, \vect{q}_C$.
    \item \textbf{Phase 2 ($K=3, N_Q=3$):} Tool $a_{A2}$ (type1) removed. (Starts at $t=2500$)
    \item \textbf{Phase 3 ($K=4, N_Q=3$):} New tool $a_{A3}$ (type1, semantically similar to $a_{A1}$) added. (Starts at $t=5000$)
    \item \textbf{Phase 4 ($K=5, N_Q=4$):} New tool $a_{D1}$ (novel semantic type4) added; query $\vect{q}_D$ (aligning with type4) becomes active. (Starts at $t=7500$)
\end{itemize}
LinUCB-OneHot re-initializes its model matrices ($A,b$) when $K$ changes. SC-LinUCB's core model matrices and semantic feature dimension ($d_{sem}=6$) remain fixed. Toy embeddings and the reward function are as described in Appendix \ref{app:exp1_details} (or a dedicated Exp2 setup section if it differs significantly). All results are averaged over $N_{runs}=15$ independent seeds.

\paragraph{Results with Varying Alphas.}
Figure \ref{fig:exp2_multialpha_appendix_plot} displays the average cumulative reward (left, log scale) and average cumulative regret (right, log scale) for both SC-LinUCB and LinUCB-OneHot across the three tested values of $\alpha$.

\begin{figure}[htbp]
    \centering
    \includegraphics[width=\textwidth]{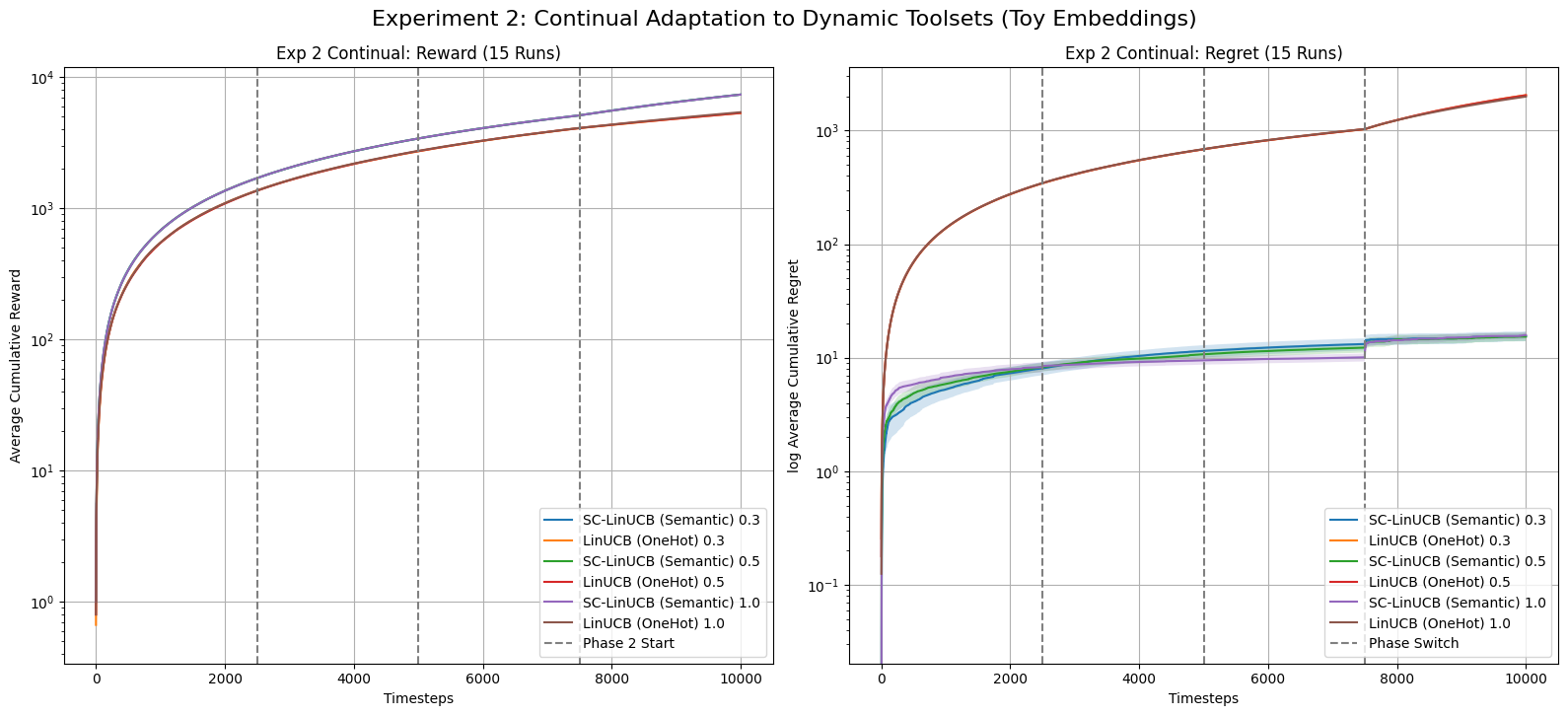} 
    \caption{Experiment 2 (Continual Adaptation): Performance of SC-LinUCB (Semantic) and LinUCB-OneHot (Non-Semantic) with varying exploration parameters $\alpha \in \{0.3, 0.5, 1.0\}$. Results over $4 \times 2500$ timesteps, averaged over $15$ runs. Vertical dashed lines indicate phase shifts. Left: Average Cumulative Reward (log scale). Right: Average Cumulative Regret (log scale).}
    \label{fig:exp2_multialpha_appendix_plot}
\end{figure}

\textbf{Cumulative Reward Analysis (Figure \ref{fig:exp2_multialpha_appendix_plot}, Left):}
SC-LinUCB variants consistently achieve higher cumulative rewards than LinUCB-OneHot variants across all tested $\alpha$ values. For SC-LinUCB, $\alpha=1.0$ (purple dashed line) yields the highest overall reward, suggesting that with strong semantic features, a reasonably high level of exploration can be beneficial for maximizing long-term reward, even in a changing environment. For LinUCB-OneHot, $\alpha=1.0$ (brown dashed line) is also its best configuration, but it still lags significantly behind all SC-LinUCB variants. The SC-LinUCB curves maintain a steadier rate of reward accumulation across phase transitions, whereas the LinUCB-OneHot curves show more pronounced slowdowns or changes in slope, indicative of their relearning periods.

\textbf{Cumulative Regret Analysis (Figure \ref{fig:exp2_multialpha_appendix_plot}, Right):}
The regret plot starkly illustrates the advantages of SC-LinUCB.
\begin{itemize}
    \item \textbf{SC-LinUCB (Semantic):} All variants (blue $\alpha=0.3$, green $\alpha=0.5$, purple $\alpha=1.0$) maintain exceptionally low cumulative regret, generally staying within the $10^0$ to $10^1$ range over $10000$ steps. The phase transitions cause only minor, temporary increases in regret, from which they recover quickly. SC-LinUCB with $\alpha=0.3$ and $\alpha=0.5$ show particularly stable and low regret. The $\alpha=1.0$ variant, while achieving high rewards, exhibits slightly higher regret and notably wider variance (shaded area), especially around phase shifts, likely due to more extensive exploration when the environment changes. This indicates that while higher exploration can find good policies, it might come at the cost of some initial suboptimality if the semantic signal is already strong.
    \item \textbf{LinUCB-OneHot (Non-Semantic):} All variants incur substantially higher regret, ending up in the $10^2$ to $10^3$ range. Crucially, at each phase transition where $K$ changes (vertical dashed lines), there is a distinct upward turn or steepening of the regret slope. This clearly visualizes the significant cost of adaptation incurred by LinUCB-OneHot as it re-initializes its model and relearns the utility of tools largely from scratch. Increased exploration (e.g., $\alpha=1.0$, brown line) helps LinUCB-OneHot achieve lower regret compared to its lower $\alpha$ counterparts, but it remains orders of magnitude worse than any SC-LinUCB variant.
\end{itemize}

\paragraph{Conclusion from Alpha Sensitivity.}
SC-LinUCB demonstrates robust superiority over LinUCB-OneHot across the tested range of exploration parameters in this continual learning setting. Its ability to leverage fixed-dimensional semantic features allows for graceful adaptation to dynamic toolsets with minimal regret cost. While LinUCB-OneHot does benefit from increased exploration, its fundamental inability to generalize semantically across tools and its need to restructure its feature space when the number of tools changes impose a significant and persistent learning burden. For the main paper, we typically present results for a representative $\alpha$ (e.g., $\alpha=0.5$) that showcases good performance for SC-LinUCB, as seen in Figure \ref{fig:exp2_regret_main_newplot}. This detailed ablation confirms the general trends.
\section{Appendix ICL Experiments}
\label{app:icl_exp} 

\subsection{Experimental Setup Details}
\label{app:experimental_details_llm_icl}

This section provides comprehensive details of the configurations used for all experiments discussed in the main paper, ensuring reproducibility.

\subsubsection{LLM Parameters and Prompt Structure}
\label{app:llm_prompt_details}
The Large Language Model (LLM) utilized across all four experiments was Gemini 2.0 Flash, accessed via the \texttt{models/gemini-2.0-flash} API endpoint.
Key generation parameters were consistently set as follows:
\begin{itemize}
    \item Temperature: \(0.5\)
    \item Maximum Output Tokens: \(500\) for Experiments 1 \& 2; \(1500\) for Experiments 3 \& 4 (to accommodate potentially longer reasoning with dynamic changes).
\end{itemize}
No specialized safety settings beyond API defaults were applied.

The fundamental prompt structure provided to the LLM comprised a system message defining the task and action presentation, followed by the interaction history and the current query.

\textbf{System Prompt Template:}
\begin{promptbox}
You are an intelligent assistant playing a multi-armed bandit game.
Your goal is to maximize your total reward over many turns. 
The available actions (tools) or types of queries may change over time.
In each turn, you are presented with a user query and a list of currently 
available actions. Each action, when chosen for a query it is suited for, 
has a specific hidden probability of yielding a reward of 1, and 0 otherwise.
If an action is not suited for the query, or no suitable action is available, 
it will likely yield a reward of 0.
You must choose one action if suitable options exist. 
If no actions are available or suitable, state that.

Available actions: [
  {Formatted list of actions based on experimental condition}
]
\end{promptbox}
The placeholder \texttt{\{Formatted list of actions...\}} was populated according to the active experimental condition (Index Only, Names Only, Description Only or Names + Descriptions) for the currently available tools in that phase/turn.

\textbf{User Message Template per Turn:}
\begin{promptbox}
Interaction History (most recent 20 turns shown for LLM context):
{Interaction history string, e.g., 
Turn 1: Query: "Full Query Text 1", Your Choice: ActionName1, Outcome: Reward 0
...
Turn K: Query: "Full Query Text K", Your Choice: ActionNameK, Outcome: Reward R_K
}

Current User Query (Global Turn {current_global_turn}): "{Current Query Text}"

Think step-by-step about which action is best for the current query. 
Consider the query, CURRENTLY available action descriptions, and past experiences. 
After your reasoning, state your final choice clearly. 
For example: "Reasoning: [...reasons...]. Chosen Action: ActionName Or Index". 
If no action is suitable or available, you can state 'Chosen Action: None'.
Which action do you choose?
\end{promptbox}

The interaction history provided in the prompt to the LLM contained the full text of the past 20 queries, chosen actions, and their rewards. The experimental framework maintained the complete history for logging and analysis. Each experiment was run for a set number of independent trials: 5 trials for Experiments 1 and 2 (static), and 7 trials for Experiments 3 and 4 (dynamic).

\subsubsection{Experiment 1 (fQfA) Configuration Details}
\label{app:exp1_details}
\begin{itemize}
    \item \textbf{Description}: Single query repeated for \(T=10\) turns, fixed action space.
    \item \textbf{Query (\texttt{q\_analyze})}: ``I have a list of sales figures for the last quarter, can you help me understand the growth pattern?'' (Optimal Arm: \texttt{tool\_A})
    \item \textbf{Arm Configurations}:
    \begin{itemize}
        \item \texttt{tool\_A} (Data Analyzer): ``Processes numerical data arrays to find trends.'' (\(p^{\text{true}}=0.9, p^{\text{subopt}}=0.55\))
        \item \texttt{tool\_B} (Text Formatter): ``Cleans and formats long text strings.'' (Designed with \(p^{\text{true}}=0.9\), used with \(p^{\text{subopt}}=0.5\) when chosen for \texttt{q\_analyze})
        \item \texttt{tool\_C} (Image Resizer): ``Changes the dimensions of image files.'' (Designed with \(p^{\text{true}}=0.8\), used with \(p^{\text{subopt}}=0.6\) when chosen for \texttt{q\_analyze})
    \end{itemize}
\end{itemize}

\subsubsection{Experiment 2 (mQfA) Configuration Details}
\label{app:exp2_details}
\begin{itemize}
    \item \textbf{Description}: Queries randomly drawn from a fixed set for \(T=50\) turns, fixed action space.
    \item \textbf{Arm Configurations}:
        \item \texttt{tool\_translate} (QuickTranslate): ``Translates short text snippets between common languages.'' (\(p^{\text{true}}=0.85, p^{\text{subopt}}=0.5\))
        \item \texttt{tool\_summarize} (BriefSummary): ``Creates a one-sentence summary of a paragraph.'' (\(p^{\text{true}}=0.75, p^{\text{subopt}}=0.5\))
        \item \texttt{tool\_calendar} (EventScheduler): ``Adds events to a user's primary calendar.'' (\(p^{\text{true}}=0.9, p^{\text{subopt}}=0.55\))
        \item \texttt{tool\_filesearch} (DocFinder): ``Searches for local documents by keyword.'' (\(p^{\text{true}}=0.7, p^{\text{subopt}}=0.6\))
    \item \textbf{Query Configurations (Randomly Sampled from this set)}:
        \item \texttt{q\_trans\_hello}: ``How do you say 'hello' in Spanish?'' (Optimal: \texttt{tool\_translate})
        \item \texttt{q\_sum\_paragraph}: ``Give me the gist of this: 'The quick brown fox jumps over the lazy dog every day.''' (Optimal: \texttt{tool\_summarize})
        \item \texttt{q\_sched\_meeting}: ``Schedule a meeting with Jane for tomorrow at 2 PM.'' (Optimal: \texttt{tool\_calendar})
        \item \texttt{q\_find\_report}: ``Find the Q3 sales report document on my drive.'' (Optimal: \texttt{tool\_filesearch})
        \item \texttt{q\_trans\_bye}: ``What is 'goodbye' in French?'' (Optimal: \texttt{tool\_translate})
        \item \texttt{q\_sum\_news}: ``Briefly, what's this news about: 'Local team wins championship after a dramatic final.'?'' (Optimal: \texttt{tool\_summarize})
\end{itemize}

\subsubsection{Experiment 3 (fQmA) Configuration Details}
\label{app:exp3_details}
\begin{itemize}
    \item \textbf{Description}: Single query repeated for \(T=35\) turns (total across phases), action space changes in phases.
    \item \textbf{Query (\texttt{Q\_ComplexMath})}: ``Solve the integral of x \^{} 2 * sin(x) from 0 to pi, and also find the square root of 1764.'' (Designated Optimal Arm (when available): \texttt{E3\_SuperCalc})
    \item \textbf{Master Arm Configurations}:
        \item \texttt{E3\_Calculator} (Basic Calculator): ``Performs simple arithmetic (+, -, *, /).'' (\(p^{\text{true}}=0.7, p^{\text{subopt}}=0.1\))
        \item \texttt{E3\_SciCalculator} (Scientific Calculator): ``Advanced math functions: exponents, logs, trig.'' (\(p^{\text{true}}=0.9, p^{\text{subopt}}=0.15\))
        \item \texttt{E3\_UnitConverter} (Unit Converter): ``Converts units (e.g., kg to lbs, meters to feet).'' (\(p^{\text{true}}=0.8, p^{\text{subopt}}=0.05\))
        \item \texttt{E3\_Plotter} (Data Plotter): ``Generates simple plots from data.'' (\(p^{\text{true}}=0.6, p^{\text{subopt}}=0.1\))
        \item \texttt{E3\_SuperCalc} (SuperMath Solver): ``Handles complex algebra, calculus, and symbolic math. The ultimate math tool.'' (\(p^{\text{true}}=0.95, p^{\text{subopt}}=0.2\))
    \item \textbf{Phase Details (Total 35 Turns)}:
    \begin{itemize}
        \item Phase 1 (\texttt{P1\_BasicTools}, 7 Turns): Active Arms: \{\texttt{E3\_Calculator}, \texttt{E3\_UnitConverter}\}.
        \item Phase 2 (\texttt{P2\_SciCalc\_Added}, 10 Turns): Active Arms: \{\texttt{E3\_Calculator}, \texttt{E3\_SciCalculator}, \texttt{E3\_UnitConverter}\}.
        \item Phase 3 (\texttt{P3\_SuperCalc\_Arrives}, 10 Turns): Active Arms: \{\texttt{E3\_SciCalculator}, \texttt{E3\_SuperCalc}\}.
        \item Phase 4 (\texttt{P4\_SuperCalc\_Only}, 8 Turns): Active Arms: \{\texttt{E3\_SuperCalc}, \texttt{E3\_Plotter}\}.
    \end{itemize}
\end{itemize}

\subsubsection{Experiment 4 (mQmA) Configuration Details}
\label{app:exp4_details}
\begin{itemize}
    \item \textbf{Description}: Both queries (randomly drawn from phase-specific sets) and actions change over \(T=28\) turns (total across phases).
    \item \textbf{Master Arm Configurations}:
        \item \texttt{E4\_Translate\_EN\_DE} (German Translator): (\(p^{\text{true}}=0.9, p^{\text{subopt}}=0.1\))
        \item \texttt{E4\_Summarize\_News} (News Summarizer): (\(p^{\text{true}}=0.85, p^{\text{subopt}}=0.15\))
        \item \texttt{E4\_Weather\_API} (City Weather): (\(p^{\text{true}}=0.92, p^{\text{subopt}}=0.1\))
        \item \texttt{E4\_Image\_Resize} (Image Resizer): (\(p^{\text{true}}=0.8, p^{\text{subopt}}=0.05\))
        \item \texttt{E4\_Code\_Python} (Python Code Assistant): (\(p^{\text{true}}=0.75, p^{\text{subopt}}=0.2\))
        \item \texttt{E4\_General\_QA} (Knowledge Bot): (\(p^{\text{true}}=0.7, p^{\text{subopt}}=0.3\))
    \item \textbf{Master Query Configurations}:
        \item \texttt{Q\_Translate\_Hello\_DE} (Optimal: \texttt{E4\_Translate\_EN\_DE})
        \item \texttt{Q\_Summarize\_Article} (Optimal: \texttt{E4\_Summarize\_News})
        \item \texttt{Q\_Weather\_Berlin} (Optimal: \texttt{E4\_Weather\_API})
        \item \texttt{Q\_Resize\_Logo} (Optimal: \texttt{E4\_Image\_Resize})
        \item \texttt{Q\_Python\_Loop} (Optimal: \texttt{E4\_Code\_Python})
        \item \texttt{Q\_Capital\_France} (Optimal: \texttt{E4\_General\_QA})
        \item \texttt{Q\_Weather\_Tokyo} (Optimal: \texttt{E4\_Weather\_API})
        \item \texttt{Q\_Python\_Function} (Optimal: \texttt{E4\_Code\_Python})
    \item \textbf{Phase Details (Total 28 Turns)}:
    \begin{itemize}
        \item Phase 1 (\texttt{P1\_Lang\_Summary}, 8 Turns): Active Arms: \{\texttt{E4\_Translate\_EN\_DE}, \texttt{E4\_Summarize\_News}, \texttt{E4\_General\_QA}\}. Active Queries: \{\texttt{Q\_Translate\_Hello\_DE}, \texttt{Q\_Summarize\_Article}, \texttt{Q\_Capital\_France}\}.
        \item Phase 2 (\texttt{P2\_Weather\_Image}, 10 Turns): Active Arms: \{\texttt{E4\_Weather\_API}, \texttt{E4\_Image\_Resize}, \texttt{E4\_General\_QA}\}. Active Queries: \{\texttt{Q\_Weather\_Berlin}, \texttt{Q\_Resize\_Logo}, \texttt{Q\_Capital\_France}, \texttt{Q\_Weather\_Tokyo}\}.
        \item Phase 3 (\texttt{P3\_Coding\_Focus}, 10 Turns): Active Arms: \{\texttt{E4\_Code\_Python}, \texttt{E4\_General\_QA}, \texttt{E4\_Weather\_API}\}. Active Queries: \{\texttt{Q\_Python\_Loop}, \texttt{Q\_Capital\_France}, \texttt{Q\_Weather\_Tokyo}, \texttt{Q\_Python\_Function}\}.
    \end{itemize}
\end{itemize}

\subsection{Additional plots}
\label{app:reward_plots}
The following figures illustrate the average cumulative regret accrued by the agent under each condition. These trends generally corroborate the findings from the reward analysis.

\begin{figure*}[htbp!]
    \centering
    \includegraphics[width=\textwidth]{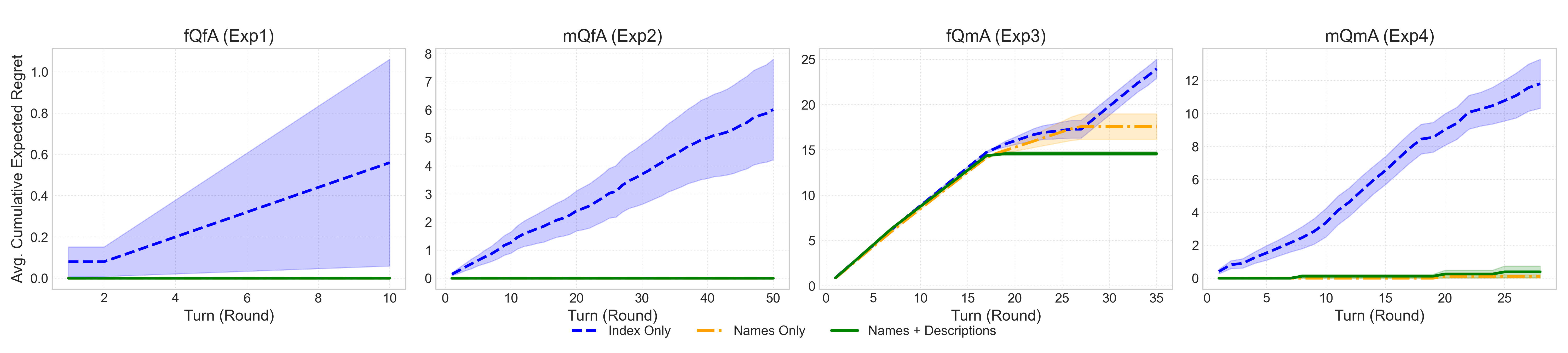} 
    \caption{Average Cumulative Expected Regret across Experiments 1-4. Subplot titles use abbreviations: f=Fixed, m=Moving, Q=Queries, A=Actions. Shaded regions represent \(\pm 1\) standard error of the mean (SEM) across trials. Note the varying x and y-axis scales across subplots, reflecting different experiment durations and regret magnitudes.}
    \label{fig:summary_regret_plots}
\end{figure*}

The experimental results, summarized by the average cumulative expected regret curves in figure \ref{fig:summary_regret_plots}, consistently demonstrate the profound impact of semantic context on the LLM's in-context learning and adaptation for tool selection.

\textbf{Static Environments (Exp1: fQfA; Exp2: mQfA):}
In environments with fixed action spaces and query distributions, the provision of rich semantic information via \textbf{Names + Descriptions (ND)} yields unequivocally superior performance. As illustrated in \ref{fig:summary_regret_plots} (Exp1 and Exp2 panels), the ND condition (green solid line) maintains a cumulative expected regret near zero throughout. This indicates that detailed tool descriptions enable the LLM to rapidly and accurately identify the optimal tool for a given query from the initial turn, effectively bypassing the need for substantial exploration. The LLM, in this condition, behaves as if endowed with strong priors that align well with the task structure.

In stark contrast, the \textbf{Index Only (IO)} condition (blue dashed line) results in the highest cumulative regret, which increases approximately linearly. This suggests that in the absence of semantic anchors, the LLM struggles to discern effective query-action mappings, leading to inefficient, near-random exploration or persistent suboptimal choices. The \textbf{Names Only (NO)} condition (orange dash-dot line) performs comparably poorly to IO in these static settings, indicating that simple tool names alone provide insufficient semantic grounding for the LLM to reliably infer optimal behavior or differentiate tool efficacies.

\textbf{Dynamic Environments (Exp3: fQmA; Exp4: mQmA):}
Non-stationary environments, characterized by changes in the available toolset and/or query distribution, reveal more nuanced interactions between semantic context and adaptability.

In Experiment 3 (fQmA: fixed query, moving actions), the ND condition again demonstrates robust adaptation (\ref{fig:summary_regret_plots}, Exp3 panel). While regret initially accumulates for all conditions due to the unavailability of the globally optimal tool (``E3\_SuperCalc''), the ND agent's regret plateaus sharply around turn 17. This event corresponds to a phase change introducing ``E3\_SuperCalc'' (details in \ref{app:exp3_details}), which the ND agent immediately identifies and exploits due to its descriptive alignment with the complex, fixed query. The NO agent also adapts, but with a discernible delay and to a higher regret plateau, suggesting that names offer some utility for identifying new tools but are less efficient than full descriptions. The IO agent fails to adapt effectively, its regret continuing to climb.

Experiment 4 (mQmA: moving queries and actions) presents the most complex scenario and yields a particularly insightful result (\ref{fig:summary_regret_plots}, Exp4 panel). Here, while the ND condition maintains very low regret overall, demonstrating strong adaptability, the \textbf{Names Only (NO) condition performs remarkably well, achieving comparable, and at times even slightly lower, cumulative regret than ND}, particularly in later phases. This unexpected outcome suggests that in highly dynamic and complex environments, excessively detailed or numerous descriptions might introduce a cognitive load or risk of misinterpretation that slightly hampers agility. Concise, sufficiently indicative names, under such rapid flux, might allow the LLM to maintain a more flexible mapping, leading to surprisingly robust performance. Nevertheless, both NO and ND vastly outperform the IO condition, which struggles profoundly with the compound dynamism.

\subsection{Example Reasoning Traces}
\label{app:subsec_icl_reasoning_traces}

Consider, for example, responses to the query:

\begin{quote}   
\textit{ "query\_text": "I have a list of sales figures for the last quarter, can you help me understand the growth pattern?",
}
\end{quote}

\begin{itemize}
    \item Index only:
\begin{itemize}
    \item "Reasoning: The user is asking for help understanding sales figures and growth patterns. This suggests a need for data analysis and visualization. Without knowing the specific functionality of each action, I will assume Action 1 is a general-purpose tool that can handle data analysis tasks. Actions 2 and 3 might be specialized tools that are less likely to be relevant to this general request.\textbackslash n\textbackslash n Chosen Action: Action 1\textbackslash n",
    \item  "Reasoning: The query is about sales figures and understanding growth patterns. Action 1 has consistently provided rewards for similar queries in the past. Therefore, it is likely the best choice for this query as well.\textbackslash n Chosen Action: Action 1",
\end{itemize}
\item Names:
\begin{itemize}
    \item "Reasoning: The query asks for help understanding a growth pattern in sales figures. This clearly points to data analysis as the appropriate action. The other actions, Text Formatter and Image Resizer, are not relevant to numerical data analysis.\textbackslash n\textbackslash n Chosen Action: Data Analyzer",
    \item "Reasoning: The query is about understanding the growth pattern of sales figures. The "Data Analyzer" tool has consistently provided a reward of 1 for similar queries in the past. The other tools, "Text Formatter" and \"Image Resizer\", are not relevant to analyzing numerical data. Therefore, the "Data Analyzer" is the most likely action to yield a reward.\textbackslash n\textbackslash n Chosen Action: Data Analyzer",
\end{itemize}
\item Names + Descriptions:
\begin{itemize}
    \item "Reasoning: The query mentions \"sales figures\" and \"growth pattern\", which indicates numerical data and the need to find trends. The Data Analyzer tool is specifically designed for processing numerical data arrays to find trends. The other tools, Text Formatter and Image Resizer, are not relevant to this query. Therefore, the Data Analyzer is the most likely to yield a reward.\textbackslash n\textbackslash n Chosen Action: Data Analyzer",
    \item  "Reasoning: The query explicitly mentions \"sales figures\" and \"growth pattern,\" which are numerical data-related tasks. The Data Analyzer is designed to process numerical data arrays to find trends. In the previous turn with a similar query, the Data Analyzer yielded a reward of 0. However, this could be due to chance, and the Data Analyzer is still the most appropriate tool for this type of query. The other tools are clearly not relevant.\textbackslash n\textbackslash n Chosen Action: Data Analyzer",
\end{itemize}

\end{itemize}

\subsection{Filter Reason Act Pipeline}
\label{app:alg_fireact}

We use the following prompt for 0-shot experiment:

\begin{promptbox}
    f"""[BEGIN OF TASK INSTRUCTION]
You are an expert in composing functions. You are given a question and a set of possible functions.
Based on the question, you will need to make one or more function/tool calls to achieve the purpose.
If none of the function can be used, point it out and refuse to answer.
If the given question lacks the parameters required by the function, also point it out.
[END OF TASK INSTRUCTION]
[BEGIN OF AVAILABLE TOOLS]
{actions_prompt_part}
[END OF AVAILABLE TOOLS]
[BEGIN OF FORMAT INSTRUCTION]
The output MUST strictly adhere to the following JSON format,
and NO other text MUST be included.
The example format is as follows. Please make sure the
parameter type is correct. If no function call is needed,
please make tool_calls an empty list ’[]’
‘‘‘
{{
"tool_calls": [
{{"name": "func_name1", "arguments": {{"argument1": "value1", "argument2": "value2"}}}},
... (more tool calls as required)
]
}}
‘‘‘
[END OF FORMAT INSTRUCTION]
[BEGIN OF QUERY]
User Query: {query}
[END OF QUERY]
"""
\end{promptbox}

where \verb|actions_prompt_part| are the available actions with descriptions in the respective IO, NO, DO or DN format and \verb|query| is the respective task. 

The LLM as judge model used was \texttt{gemini-2.5-flash-light}.

\end{document}